\theoremstyle{definition}
\newtheorem{definition}{Definition}[section]
\theoremstyle{definition}
\theoremstyle{definition}
\newtheorem{example}{Example}[section]
\theoremstyle{definition}
\theoremstyle{definition}
\newenvironment{problogcode}{\captionsetup{type=listing}}{}
\newcommand{\dspl}{DeepSeaProbLog\xspace}
\newcommand{\dpl}{DeepProbLog\xspace}
\newcommand{\Neuralparams}{\ensuremath{\bm{\Lambda}}}
\newcommand{\neuralparam}{\ensuremath{\lambda}}
\newcommand{\reparam}{\ensuremath{r(\variables{u},\Neuralparams)}}
\newcommand{\deriv}{\ensuremath{\partial_\neuralparam}}
\newcommand{\dfacts}{\ensuremath{\mathcal{F}_D}}
\newcommand{\lrules}{\ensuremath{\mathcal{R}_L}}
\newcommand{\compset}{\ensuremath{\mathcal{C}_M}}
\newcommand{\compsubset}{\ensuremath{C_M}}
\newcommand{\negcompsubset}{\ensuremath{\overline{C}_M}}
\newcommand{\dsplprogram}{\ensuremath{\mathbb{P}}\xspace}
\newcommand{\world}{\ensuremath{\omega}\xspace}
\newcommand{\coolness}{\ensuremath{\beta}\xspace}
\newcommand{\weight}{\ensuremath{w\xspace}}
\newcommand{\variables}[1]{\ensuremath{\bm{#1}}\xspace}
\newcommand{\amc}{\ensuremath{\text{SP}}\xspace}
\newcommand{\relaxation}{\ensuremath{s}}
\newcommand{\softened}{\ensuremath{s}}
\declaretheoremstyle[%
  spaceabove=-6pt,%
  spacebelow=6pt,%
  headfont=\normalfont\itshape,%
  postheadspace=1em,%
  qed=\qedsymbol%
]{mystyle} 
\declaretheorem[name={Proof},style=mystyle,unnumbered,
]{prf}
\newcommand{\indicator}{\ensuremath{\mathbbm{1}}}
\definecolor{red_salsa}{HTML}{F94144}
\definecolor{orange_red}{HTML}{F3722C}
\definecolor{yellow_orange}{HTML}{F8961E}
\definecolor{mango_tango}{HTML}{F9844A}
\definecolor{maize_crayola}{HTML}{F9C74F}
\definecolor{pistachio}{HTML}{90BE6D}
\definecolor{jungle_green}{HTML}{43AA8B}
\definecolor{steel_teal}{HTML}{4D908E}
\definecolor{queen_blue}{HTML}{577590}
\definecolor{celadon_blue}{HTML}{277DA1}
\newcounter{lstlabelcounter}
\newcommand{\probloginline}[1]{\mintinline{problog.py:ProbLogLexer -x}{#1}}
\newcommand{\expone}{{\bf (E1)}\xspace}
\newcommand{\exptwo}{{\bf (E2)}\xspace}
\newcommand{\inlineimage}[1]{
\ensuremath{\vcenter{\hbox{\includegraphics[height=9pt]{#1}}}}}
\title{Neural Probabilistic Logic Programming in Discrete-Continuous Domains}
\author{
  Lennert De Smet \\
  Dept. of Computer Science, \\
  KU Leuven \\
   \And
  Pedro Zuidberg Dos Martires \\
  Dept. of Computer Science, \\
  \"Orebro University \\
  \And
  Robin Manhaeve \\
  Dept. of Computer Science, \\
  KU Leuven \\
  \And
  Giuseppe Marra \\
  Dept. of Computer Science, \\
  KU Leuven \\
  \And
  Angelika Kimmig \\
  Dept. of Computer Science, \\
  KU Leuven \\
  \And
  Luc De Raedt \\
  Dept. of Computer Science, \\
  KU Leuven \\
}
\begin{document}
\maketitle

\begin{abstract}
    Neural-symbolic AI (NeSy) allows neural networks to exploit symbolic background knowledge in the form of logic. It has been shown to aid learning in the limited data regime and to facilitate inference on out-of-distribution data.
    Probabilistic NeSy focuses on integrating neural networks with both logic and probability theory, which additionally allows learning under uncertainty.
    A major limitation of current probabilistic NeSy systems, such as \dpl, is their restriction to finite probability distributions, i.e., discrete random variables.
    In contrast, deep probabilistic programming (DPP) excels in modelling and optimising continuous probability distributions.
    Hence, we introduce \dspl, a neural probabilistic logic programming language that incorporates DPP techniques into NeSy.
    Doing so results in the support of inference and learning of both discrete and continuous probability distributions under logical constraints.
    Our main contributions are
    1) the semantics of \dspl and its corresponding inference algorithm,
    2) a proven asymptotically unbiased learning algorithm, and
    3) a series of experiments that illustrate the versatility of our approach.
\end{abstract}

\section{Introduction}\label{sec:intro}

Neural-symbolic AI (NeSy)~\citep{garcez2002neural,dereadt2021statistical} focuses on the integration of symbolic and neural methods. 
The advantage of NeSy is that it combines the reasoning power of logical representations with the learning capabilities of neural networks.
Additionally, it has been shown to converge faster during learning and to be more robust~\citep{rocktaschel2017end,xu2018semantic,evans2018learning}.

The challenge of NeSy lies in combining discrete symbols with continuous and differentiable neural representations.
So far, such a combination has been realised for Boolean variables by interpreting the outputs of neural networks as the weights of these variables.
These weights can then be given either a fuzzy semantics~\citep{badreddine2022logic,diligenti2017sbr}
or a probabilistic semantics~\citep{manhaeve2021neural,yang2020neurasp}. 
The latter is also used in neural probabilistic logic programming (NPLP), where neural networks parametrise probabilistic logic programs.

A shortcoming of traditional probabilistic NeSy approaches is that they fail to capture models that integrate continuous random variables and neural networks --
a feature already achieved with mixture density networks~\citep{bishop1994mixture} and more generally deep probabilistic programming (DPP)~\citep{tran2017deep,bingham2019pyro}.
However, it is unclear whether DPP can be generalised to enable logical and relational reasoning.
Hence, a gap exists between DPP and NeSy as reasoning is, after all, a fundamental component of the latter.
We contribute towards closing this DPP-NeSy gap by introducing \dspl\footnote{\q{Sea} stands for the letter C, as in {\bf c}ontinuous random variable.},
an NPLP language
with support for discrete-continuous random variables that retains logical and relational reasoning capabilities.
We achieve this integration by allowing
arbitrary and differentiable probability distributions expressed in a modern DPP language
while combining knowledge compilation~\citep{darwiche2002knowledge} with the reparametrisation trick~\citep{ruiz2016generalized} and continuous relaxations~\citep{petersen2021learning}.

Our main contributions are \begin{enumerate*}[label=(\arabic*)]
    \item the well-defined probabilistic semantics of \dspl (Section~\ref{sec:dspl}) with an inference algorithm based on
    weighted model integration (WMI)~\citep{belle2015probabilistic}
    (Section~\ref{subsec:reduction}),
    \item a proven asymptotically unbiased gradient estimate for WMI that turns \dspl into a differentiable, discrete-continuous NPLP language (Section~\ref{subsec:learning}), and
    \item an experimental evaluation showing the versatility of discrete-continuous reasoning and the efficacy of our approach\label{contrib3} (Section~\ref{sec:experiments})
\end{enumerate*}.

\section{Logic programming concepts}
\label{sec:lp}

A term~\probloginline{t} is either a constant~\probloginline{c}, a variable~\probloginline{V} or a structured term of the form
\probloginline{f(t@$_1$@,...,t@$_K$@)},
where \probloginline{f} is a functor and each \probloginline{t@$_i$@} is a term. 
Atoms are expressions of the form
\probloginline{q(t@$_1$@,...,t@$_K$@)}.
Here, \probloginline{q/@$K$@} is a predicate of arity $K$ 
and each
\probloginline{t@$_i$@} is a term.
A literal is an atom or the negation of an atom 
$\neg \text{\probloginline{q(t@$_1$@,...,t@$_K$@)}}$.
A definite clause (also called a rule) is an expression of the form
\probloginline{h:- b@$_1$@,...,b@$_K$@}
where \probloginline{h} is an atom and each \probloginline{b@$_i$@} is a literal.
Within the context of a rule, \probloginline{h} is called the head and the conjunction of \probloginline{b@$_i$@}'s is referred to as the body of the rule. Rules with an empty body are called facts.
A logic program is a finite set of definite clauses. 
If an expression does not contain any variables, it is called ground.
Ground expressions are obtained from non-ground ones by means of substitution.
A substitution
$\theta = \{
    \text{\probloginline{V@$_1$@}}
    =
    \text{\probloginline{t@$_1$@}}
    ,
    \dots,
    \text{\probloginline{V@$_K$@}}
    =
    \text{\probloginline{t@$_K$@}}
\}$
is a mapping from variables~\probloginline{V@$_i$@} to terms~\probloginline{t@$_i$@}. 
Applying a substitution $\theta$ to an expression \probloginline{e} (denoted \probloginline{e@$\theta$@})
replaces each occurrence of \probloginline{V@$_i$@} in \probloginline{e} with the corresponding \probloginline{t@$_i$@}.

While {\em pure} Prolog (or definite clause logic) is defined using the concepts above, practical implementations of Prolog extend definite clause logic with  an external arithmetic engine~\cite[Section 8]{sterling1994art}. Such engines enable the use of system specific routines in order to handle numeric data efficiently.
Analogous to standard terms in definite clause logic, as defined above, we introduce numeric terms. A numeric term  \probloginline{n@$_i$@} is either a numeric constant (a real, an integer, a float, etc.), a numeric variable \probloginline{N@$_i$@},
or a numerical functional term, which is an expression of the form
\probloginline{@$\varphi$@(n@$_1$@,...,n@$_K$@)} where \probloginline{@$\varphi$@} is an externally defined numerical function.
The difference between a standard logical term and a numerical term is that {\em ground} numerical terms are evaluated and yield a numeric constant. For instance,  if \probloginline{add} is a function, then 
\probloginline{add(3, add(5, 0))} evaluates to the numerical constant \probloginline{8}. 

Lastly, numeric constants can be compared to each other using a built-in binary comparison operator
$\bowtie\ \in
\left\{
\text{\probloginline{<}},
\text{\probloginline{=<}},
\text{\probloginline{>}},
\text{\probloginline{>=}},
\text{\probloginline{=:=}},
\text{\probloginline{=\=}}
\right\}$.
Here we use Prolog syntax to write comparison operators, which correspond to $\{ <, \leq, >, \geq, =, \neq \}$ in standard mathematical notation.
Comparison operators appear in the body of a rule, have two arguments, and are generally written as
$
\text{\probloginline{@$\varphi_l$@(n@$_{l,1}$@,...,n@$_{n,K}$@)}}
\bowtie
\text{\probloginline{@$\varphi_r$@(n@$_{r,1}$@,...,n@$_{r,K}$@)}}.
$
They evaluate
their left and right side and subsequently compare the results, assuming everything is ground.
If the stated comparison holds, the comparison is interpreted by the logic program as true, else as false.

\section{\dspl}
\label{sec:dspl}

\subsection{Syntax}
While facts in pure Prolog are deterministically true, in probabilistic logic programs they are annotated with the probability with which they are true.
These are the so-called probabilistic facts~\citep{de2007problog}. When working in discrete-continuous domains, we need to use the more general concept of distributional facts~\citep{zuidberg2020atoms},
inspired by the distributional clauses of~\citet{gutmann2011magic}.

\begin{definition}[Distributional fact]  \emph{Distributional facts} are expressions of the form
\probloginline{x ~ distribution(n@$_1$@,...,n@$_K$@)}, where \probloginline{x} denotes a term,  the \probloginline{n@$_i$@}'s are numerical terms and
\probloginline{distribution} expresses the probability distribution according to which \probloginline{x} is distributed.
\end{definition}

\begin{example}[Distributional fact]
To declare a Poisson distributed variable \probloginline{x} with rate parameter $\lambda$, one would write \probloginline{x ~ poisson(@$\lambda$@).}
\end{example}

The meaning of a distributional fact is that all ground instances $\text{\probloginline{x}}\theta$ serve as random variables that are distributed according to $\text{\probloginline{distribution(n@$_1\theta$@,...,n@$_K\theta$@)}}$.
To obtain a neural-symbolic interface, we will allow neural networks to parametrise these distributions.

\begin{definition}[Neural distributional fact]
\label{def:ndf}
A \emph{neural distributional fact} (NDF) is a distributional fact \probloginline{x ~ distribution(n@$_1$@,...,n@$_K$@)} in which
a subset
of the set of numerical terms $\{ \text{\probloginline{n@$_i$@}}\}_{i=1}^{K}$ is implemented by neural networks that depend on a set of parameters $\Neuralparams$.
\end{definition}

Random variables defined by NDFs can then be used in the logic in the form of comparisons, e.g., $\probloginline{x} > \probloginline{y}$, to reason about desired ranges of the variables.

\begin{definition}(Probabilistic comparison formula)
\label{def:pcf}
A \emph{probabilistic comparison formula} (PCF) is an expression of the form $(g(\boldsymbol{x}) \bowtie 0)$,
where $g$ is a function applied to the set of random variables $\variables{x}$ and $\bowtie\ \in
\left\{
\text{\probloginline{<}},
\text{\probloginline{=<}},
\text{\probloginline{>}},
\text{\probloginline{>=}},
\text{\probloginline{=:=}},
\text{\probloginline{=\=}}
\right\}$ is a binary comparison operator.
A PCF is called \emph{valid} if $\left\{\boldsymbol{x}\ |\ g(\boldsymbol{x}) \bowtie 0\right\}$ is a \emph{measurable} set.
\end{definition}

\begin{example}[Probabilistic comparison formula]
If \probloginline{x} is Poisson distributed and represents the number of chocolate pieces put in a chocolate biscuit, then we can use a simple PCF to define when such a biscuit passes a quality test through the rule \probloginline{passes_test :- (x > 11).}
\end{example}

Note that the general form of a PCF in Definition~\ref{def:pcf} has a $0$ on the right hand side, which can always be obtained by subtracting the right hand-side from both sides of the relation.
With the definitions of NDFs and PCFs, a \dspl program can now be formally defined.

\begin{definition}[\dspl program]
A \emph{\dspl program} consists of
a finite set of  NDFs $\dfacts$ (defining random variables),
a finite set $\compset$ of  valid PCFs
and a set of logical rules $\lrules$ that can use any of those valid PCFs in their bodies.
\end{definition}

\begin{example}[\dspl program]
\label{ex:dsplprogram}
\probloginline{humid} denotes a Bernoulli
random variable that takes the value $1$ with a probability $p$ given by the output of a neural network \probloginline{humid_detector}. \probloginline{temp} denotes a normally distributed variable whose parameters are predicted by a network \probloginline{temperature_predictor}.
The program further contains two rules that deduce whether we have good weather or not. The first one expresses the case of snowy weather, while the second holds for a rather temperate and dry situation.
The atom \probloginline{query(good_weather(}$\inlineimage{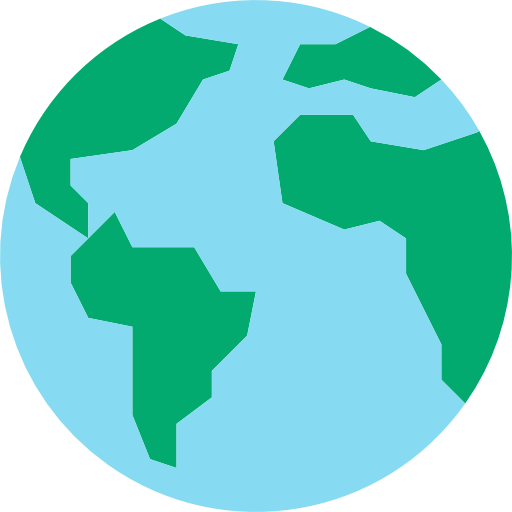}$\probloginline{))} declares that we want to compute the probability of \probloginline{good_weather} when evaluated on the data $\inlineimage{Imagery/world.png}$.
It illustrates the neural-symbolic nature of \dspl, as its ground argument is a sub-symbolic representation ($\inlineimage{Imagery/world.png}$) of the world.

{\footnotesize
\begin{problog}
humid(Data) ~ 
    bernoulli(humid_detector(Data)).
temp(Data, T) ~ 
    normal(temperature_predictor(Data)).

good_weather(Data):- 
    humid(Data) =:= 1, temp(Data) < 0.

good_weather(Data):- 
    humid(Data) =:= 0, temp(Data) > 15.

query(good_weather(@$\inlineimage{Imagery/world.png}$@)).
\end{problog}
}
\end{example}

Notice how the random variables \probloginline{humid} and \probloginline{temp} appear in the body of a logical rule with comparison operators.
In our probabilistic setting, the truth value of a comparison depends on the value of its random variables and is thus random itself.

\paragraph{\dspl generalises a range of existing PLP languages.}
For instance, if we were to remove the distributional fact on \probloginline{temp} and all the PCFs using them, we would obtain a \dpl program~\citep{manhaeve2021neural}. If we additionally replace the neural network in \probloginline{humid} with a fixed probability \probloginline{p}, we end up with a probabilistic logic program~\citep{de2007problog}. Replacing that constant probability \probloginline{p} by a constant \probloginline{1} yields a non-probabilistic Prolog program.
Alternatively, considering all rules and facts in Example~\ref{ex:dsplprogram} but
replacing the neural parameters of the normal distribution with numeric constants results in a Distributional Clause program~\citep{gutmann2011magic}.
We further discuss these connections
in Appendix~\ref{app:special}, where we also formally prove that \dspl strictly generalises \dpl.

\subsection{Semantics}
\dspl programs are used to
compute the probability that a ground atom \probloginline{q} is entailed.
That probability follows from the semantics of a \dspl program.
As is custom in (probabilistic) logic programming, we will define the semantics of \dspl with respect to ground programs.
We will assume that each ground distributional fact $f \in \dfacts$ defines a different random variable, as each random variable can only have one unique distribution.
Also notice that any ground neural distributional facts will contain the inputs to their neural functions.
In a sense, a \dspl program is conditioned on these neural network inputs.

To define the semantics of ground \dspl programs, we first introduce the possible worlds over the PCFs.
Every subset $\compsubset$ of a set of PCFs $\compset$ defines a possible world $\world_{\compsubset} = \{ \compsubset \cup h \theta \mid \lrules \cup \compsubset  \models h\theta \text{ and } h\theta \text{ is ground} \}$.
Intuitively speaking, the comparisons in such a subset are considered to be true and all others false. A rule with a comparison in its body that is not in this subset can hence not be used to determine the truth value of atoms. The deterministic rules $\lrules$ and the subset $\compsubset$ together define a set of all ground atoms $h\theta$ that are derivable, i.e., entailed by the program, and thus considered true. Such a set is called a \textit{possible world}. We refer the reader to the paper of~\citet{deraedt2015concepts} for a detailed account of possible worlds in a PLP context.
Following the distribution semantics of~\citet{sato1995statistical} and by taking inspiration from~\citet{gutmann2011magic}, we define the probability of a possible world.

\begin{definition}[Probability of a possible world]
Let \dsplprogram be a ground \dspl program
and $\compsubset=\{ c_1,\dots, c_H \} \subseteq \compset$ a set of PCFs that depend on the random variables declared in the set of distributional facts $\dfacts$. The probability $P(\world_{\compsubset})$ of a world $\world_{\compsubset}$
is then defined as
\begin{align}
        \int
        \left[
            \Big(\prod_{c_i\in \compsubset} \indicator(c_i) \Big)
            \Big( \prod_{c_i \in \compset\setminus \compsubset}  \indicator(\bar{c}_i) \Big)
        \right]
        \ \differential P_{\dfacts}.
        \label{eq:world_probability}
\end{align}

Here the symbol $\indicator$ denotes the indicator function, $\bar{c}_i$ is the complement of the comparison $c_i$ and $\differential P_{\dfacts}$ represents the joint probability measure of the random variables defined in the set of distributional facts $\dfacts$.
\end{definition}

\begin{example}[Probability of a possible world]
Given \dsplprogram as in Example \ref{ex:dsplprogram}, 
where \probloginline{humid_detector(data1)} predicts $p(\text{\probloginline{data1}})$ and \probloginline{temperature_predictor(data1)} predicts the tuple $(\mu(\text{\probloginline{data1}}),\sigma(\text{\probloginline{data1}}))$, the probability of the possible world $\omega_{\left\{\text{\probloginline{temp(data1)>15}},\ \text{\probloginline{humid(data1)=:=1}}\right\}}$ is given by
\begin{align}
    p(\text{\probloginline{data1}})\cdot
    \int\indicator(x {>} 15)
    \frac{
    \exp
    \left(
        -\frac{
            \left(x-\mu(\text{\probloginline{data1}}) \right)^2
        }
        {
            2 \sigma^2(\text{\probloginline{data1}})
        }
    \right)
    }{\sqrt{2\pi} \sigma(\text{\probloginline{data1}})}
    \ \differential x.
\end{align}
Indeed, the measure $\differential P_{\dfacts}$ decomposes into a counting measure and the product of a Gaussian density function with a differential. The counting measure leads to the factor $p(\text{\probloginline{data1}})$, since that is the probability that \probloginline{humid(data1)=:=1}. Hence, the products in Equation~\ref{eq:world_probability} reduce to a single indicator of the PCF $(x > 15)$.
\end{example}

\begin{definition}[Probability of query atom]
The probability of a ground atom $q$ is given by
\begin{align}
    P(q) = \sum_{\compsubset \subseteq \compset : q \in \world_{\compset} } P(\world_{\compsubset})
    \label{eq:query_probability}.
\end{align}
\end{definition}
\begin{restatable}[Measureability of query atom]{proposition}{querymeasurability}
\label{proposition:query_measurability}
Let \dsplprogram be a \dspl program, then
\dsplprogram 
defines, for an arbitrary query atom $q$, the probability that $q$ is true.
\label{prop:semantics}
\end{restatable}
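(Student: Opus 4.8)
\begin{prf}
The plan is to unfold the definitions and reduce the statement to two elementary facts: that a finite Boolean combination of measurable sets is again measurable, and that the deterministic part of a ground program is a finite object whose entailment relation is purely combinatorial. Concretely, I will establish (i) that every world probability $P(\world_{\compsubset})$ in Equation~\ref{eq:world_probability} is a well-defined real number in $[0,1]$; (ii) that the subsets $\compsubset \subseteq \compset$ index a finite \emph{measurable} partition of the sample space $\Omega$ underlying $\differential P_{\dfacts}$; and (iii) that the event ``$q$ is true'' is one of the measurable sets generated by this partition, with measure exactly $P(q)$.

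For (i), recall that $\dfacts$ is finite, so a ground program declares only finitely many random variables and $\differential P_{\dfacts}$ is a probability measure on a finite product space $\Omega$. Because $\compset$ consists of \emph{valid} PCFs, each set $\{\boldsymbol{x} : c_i\}$ is measurable and hence so is its complement $\{\boldsymbol{x} : \bar{c}_i\}$; therefore $\indicator(c_i)$ and $\indicator(\bar{c}_i)$ are measurable functions on $\Omega$. The integrand of Equation~\ref{eq:world_probability} is a finite product of such functions, hence a measurable function with values in $\{0,1\}$, and its integral against the probability measure $\differential P_{\dfacts}$ is therefore a well-defined number in $[0,1]$. (If some distributional facts are discrete, $\differential P_{\dfacts}$ factorises into a counting measure and a continuous part; since the integrand is non-negative this is covered by the same argument and needs no separate treatment.)

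For (ii) and (iii), note that for a fixed $\compsubset$ the integrand of Equation~\ref{eq:world_probability} is exactly the indicator of the cell
\[
    A_{\compsubset} \;=\; \Big(\bigcap_{c_i \in \compsubset}\{\boldsymbol{x} : c_i\}\Big) \,\cap\, \Big(\bigcap_{c_i \in \compset\setminus\compsubset}\{\boldsymbol{x} : \bar{c}_i\}\Big),
\]
so $P(\world_{\compsubset}) = P_{\dfacts}(A_{\compsubset})$. As $\compsubset$ ranges over the finitely many subsets of $\compset$, the sets $A_{\compsubset}$ are pairwise disjoint and cover $\Omega$, since any realisation $\boldsymbol{x}$ satisfies a unique subset of the PCFs; and each $A_{\compsubset}$ is measurable, being a finite intersection of measurable sets. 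For a fixed $\compsubset$, whether $q \in \world_{\compsubset}$ is decided by the ordinary entailment $\lrules \cup \compsubset \models q$ over a finite ground program --- a purely combinatorial check with no measure-theoretic content. Hence the event ``$q$ is true'' equals the finite union $\bigcup_{\compsubset :\, q \in \world_{\compsubset}} A_{\compsubset}$, which is measurable, and by finite additivity of $P_{\dfacts}$ its probability equals $\sum_{\compsubset :\, q \in \world_{\compsubset}} P_{\dfacts}(A_{\compsubset}) = P(q)$. Since the $A_{\compsubset}$ are disjoint subsets of $\Omega$, this value lies in $[0,1]$, so $P(q)$ is genuinely the probability that $q$ is true.

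The single place where the argument could break down is step~(i): the measurability of the integrand. This is precisely what the \emph{validity} requirement in Definition~\ref{def:pcf} is there to guarantee, so the real content of the proof is just making that dependence explicit. Everything else --- finiteness of the number of worlds, disjointness of the cells $A_{\compsubset}$, and the reduction of query entailment to finite logical reasoning --- is routine.
\end{prf}
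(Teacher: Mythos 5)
Your proof is correct, but it takes a genuinely different route from the paper. The paper's own proof is a one-paragraph reduction: it observes that a \dspl program is essentially a Distributional Clauses program in which the distribution parameters may be arbitrary numeric terms rather than constants, checks that this does not violate the hypotheses of Proposition~1 of \citet{gutmann2011magic}, and inherits measurability from that result (adding the side condition that the program be sound, i.e.\ finitely groundable when queried). You instead give a self-contained, first-principles argument: validity of the PCFs gives measurability of each indicator, the sets $A_{\compsubset}$ form a finite measurable partition of the sample space, entailment $\lrules \cup \compsubset \models q$ is a finite combinatorial check, and the event ``$q$ is true'' is the finite disjoint union $\bigcup_{\compsubset : q \in \world_{\compsubset}} A_{\compsubset}$, whose measure is exactly the sum in Equation~\ref{eq:query_probability}. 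Your version is more transparent and actually exhibits the event whose probability $P(q)$ is, which the paper's citation-based proof does not; it also makes explicit that validity of the PCFs is the only place measure theory can fail. What the paper's reduction buys is coverage of the one point you take as given: that $\differential P_{\dfacts}$ is itself a well-defined joint probability measure. For distributional facts with data-dependent (rather than random-variable-dependent) parameters this is just a finite product measure and is immediate, so your assumption is harmless here, but Gutmann et al.'s machinery is what handles the general case of distributions whose parameters depend on other random variables, and it is also where the soundness/finite-grounding caveat comes from. It would strengthen your write-up to state explicitly that you treat $\differential P_{\dfacts}$ as given by the program and that the ground program is assumed finite.
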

\vspace{5.5pt}
\begin{prf}
See Appendix~\ref{app:proof_query_probability}.
\end{prf}

\section{Inference and learning}\label{sec:learning}

\subsection{Inference via weighted logic}
\label{subsec:reduction}

A popular technique to perform inference in probabilistic logic programming uses a reduction to so-called {\em weighted model counting} (WMC); instead of computing the probability of a query, one computes the weight of a propositional logical formula~\citep{chavira2008probabilistic,fierens2015inference}. For \dspl, the equivalent approach is to map a ground program onto a {\em satisfiability modulo theory} (SMT) formula~\citep{barrett2018satisfiability}. The analogous concept to WMC for these formulas is {\em weighted model integration} (WMI)~\citep{belle2015probabilistic}, which can handle infinite sample spaces. 
In all that follows, for ease of exposition, we assume that all joint probability distributions are continuous.

\begin{restatable}[Inference as WMI]{proposition}{inferenceaswmi}
\label{proposition:inferenceaswmi}
Assume that the measure $\differential P_{\dfacts}$ decomposes into a joint probability density function $\weight(\variables{x})$ and a differential $\differential \variables{x}$, then the probability $P(q)$ of a query atom $q$ can be expressed as the weighted model integration problem
\begin{align}
    \int \left[
        \sum_{\compsubset \subseteq \compset : q \in \world_{\compset}} \prod_{c_i\in \compsubset {\cup} \negcompsubset}  \indicator(c_i(\variables{x})) 
    \right]
    \weight(\variables{x})
    \ \mathrm{d}\variables{x},
    \label{eq:probability_query_as_wmi}
\end{align}
where
$
    \negcompsubset \coloneqq \left\{\bar{c}_i\ |\ c_i \in \compset {\setminus} \compsubset\right\}
$
.
\end{restatable}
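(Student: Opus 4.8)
The plan is to obtain Equation~\ref{eq:probability_query_as_wmi} by unfolding the two definitions that constitute $P(q)$ and then pulling a single Lebesgue integral to the front. I would start from Equation~\ref{eq:query_probability}, substitute into it the integral expression for each world probability $P(\world_{\compsubset})$ given by Equation~\ref{eq:world_probability}, and apply the standing hypothesis that $\differential P_{\dfacts}$ decomposes as a density $\weight(\variables{x})$ against $\differential \variables{x}$. Each world probability then reads
\[
  \int \Bigl(\prod_{c_i\in\compsubset}\indicator(c_i(\variables{x}))\Bigr)\Bigl(\prod_{c_i\in\compset\setminus\compsubset}\indicator(\bar{c}_i(\variables{x}))\Bigr)\weight(\variables{x})\,\differential \variables{x},
\]
and merging the two finite products into a single product over the index set $\compsubset\cup\negcompsubset=\compsubset\cup\{\bar{c}_i\mid c_i\in\compset\setminus\compsubset\}$ rewrites the bracketed factor as $\prod_{c_i\in\compsubset\cup\negcompsubset}\indicator(c_i(\variables{x}))$, which is exactly the integrand notation of Equation~\ref{eq:probability_query_as_wmi}. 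Summing over the (finitely many) subsets $\compsubset$ with $q\in\world_{\compsubset}$ and exchanging that finite sum with the integral yields the claim.

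Two routine points have to be discharged along the way. First, measurability: every PCF in $\compset$ is valid by Definition~\ref{def:pcf}, so each $\{\variables{x}\mid c_i(\variables{x})\}$ and its complement is a measurable set, hence each $\indicator(c_i(\variables{x}))$ and $\indicator(\bar{c}_i(\variables{x}))$ is a measurable function; a finite product of these with the density $\weight$, and a finite sum of such products, is again measurable, so the single integral on the right of Equation~\ref{eq:probability_query_as_wmi} is well defined — the left-hand side already being so by Proposition~\ref{proposition:query_measurability}. Second, the sum--integral interchange: since $\compset$ is finite, the index set $\{\compsubset\subseteq\compset : q\in\world_{\compsubset}\}$ is finite, so the exchange is nothing more than linearity of the integral (or, if one prefers a single uniform justification, Tonelli's theorem, since all integrands are nonnegative).

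I expect no genuine obstacle here: the statement is essentially a reorganisation of Equations~\ref{eq:world_probability} and~\ref{eq:query_probability} once the density hypothesis is in force, with the two technicalities above the only things to verify. It is worth recording, however, why the right-hand side deserves to be called a weighted model integration problem rather than merely a sum of integrals. For every realisation $\variables{x}$, exactly one of $\indicator(c_i(\variables{x}))$, $\indicator(\bar{c}_i(\variables{x}))$ equals $1$, so precisely one subset — namely $\compsubset(\variables{x})\coloneqq\{c_i\in\compset : c_i(\variables{x})\text{ holds}\}$ — satisfies all the indicators indexed by $\compsubset\cup\negcompsubset$. Consequently the bracketed sum in Equation~\ref{eq:probability_query_as_wmi} equals, pointwise, the indicator of the event $q\in\world_{\compsubset(\variables{x})}$, i.e.\ the indicator function of the models of the SMT formula $\bigvee_{\compsubset : q\in\world_{\compsubset}}\bigl(\bigwedge_{c_i\in\compsubset}c_i\ \wedge\ \bigwedge_{c_i\in\compset\setminus\compsubset}\bar{c}_i\bigr)$, and the right-hand side is then the integral of the weight $\weight$ over those models. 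This observation also re-derives, in passing, the continuous case of Proposition~\ref{proposition:query_measurability}.
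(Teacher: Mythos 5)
Your proposal is correct and follows essentially the same route as the paper's proof: unfold Equation~\ref{eq:query_probability} using Equation~\ref{eq:world_probability}, apply the density decomposition of $\differential P_{\dfacts}$, merge the two products into one over $\compsubset \cup \negcompsubset$, and interchange the sum over subsets with the integral. The only (harmless) divergence is in the justifications: you discharge the interchange elementarily via finiteness of $\compset$ and linearity, and you give a self-contained argument for why the resulting expression is a weighted model integral, whereas the paper delegates both points to the de Finetti-based argument of \citet{zuidberg2019exact}; both are sound.
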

\vspace{5.5pt}
\begin{prf}
See Appendix~\ref{app:proof_inference_as_wmi}.
\end{prf}
Being able to express the probability of a queried atom in \dspl as a weighted model integral allows us to adapt and deploy inference techniques developed in the weighted model integration literature for \dspl.
We opt for the approximate inference algorithm \q{Sampo} presented in~\citet{zuidberg2019exact} because of its more scalable nature.
Sampo uses knowledge compilation~\citep{darwiche2002knowledge}, a state-of-the-art technique for probabilistic logic inference~\citep{chavira2008probabilistic,fierens2015inference}.
Intuitively, knowledge compilation is a two-step procedure applied to a logical formula with PCFs, i.e., an SMT formula.
First, it infers the exact probability of all PCFs containing discrete variables through symbolic inference.
Then, it converts the remainder of the SMT formula into a polynomial in terms of those exact probabilities and the PCFs containing continuous random variables (Figure~\ref{fig:amcdiagram2}).
This polynomial is the integrand of Equation~\ref{eq:probability_query_as_wmi}. All that remains is to approximate the integration of this polynomial by sampling from the joint probability distribution $\weight(\variables{x})$ of the continuous random variables.
In other words, Sampo computes the expression
\begin{align}
    P(q) =
    \int \amc(\variables{x}) \cdot  \weight(\variables{x}) \, \differential\variables{x}
   \approx
    \frac{1}{|\mathcal{X}|}
    \sum_{\variables{x} \in \mathcal{X}} \amc(\variables{x}),
    \label{eq:probability_query_approx}
\end{align}
where $\mathcal{X}$ denotes a set of samples drawn from $\weight(\variables{x})$ and $\amc(\variables{x})$ is the result of knowledge compilation, i.e., the sum of products of indicator functions in Equation~\ref{eq:probability_query_as_wmi}.

\begin{figure}
    \centering
    \includegraphics[width=0.7\linewidth]{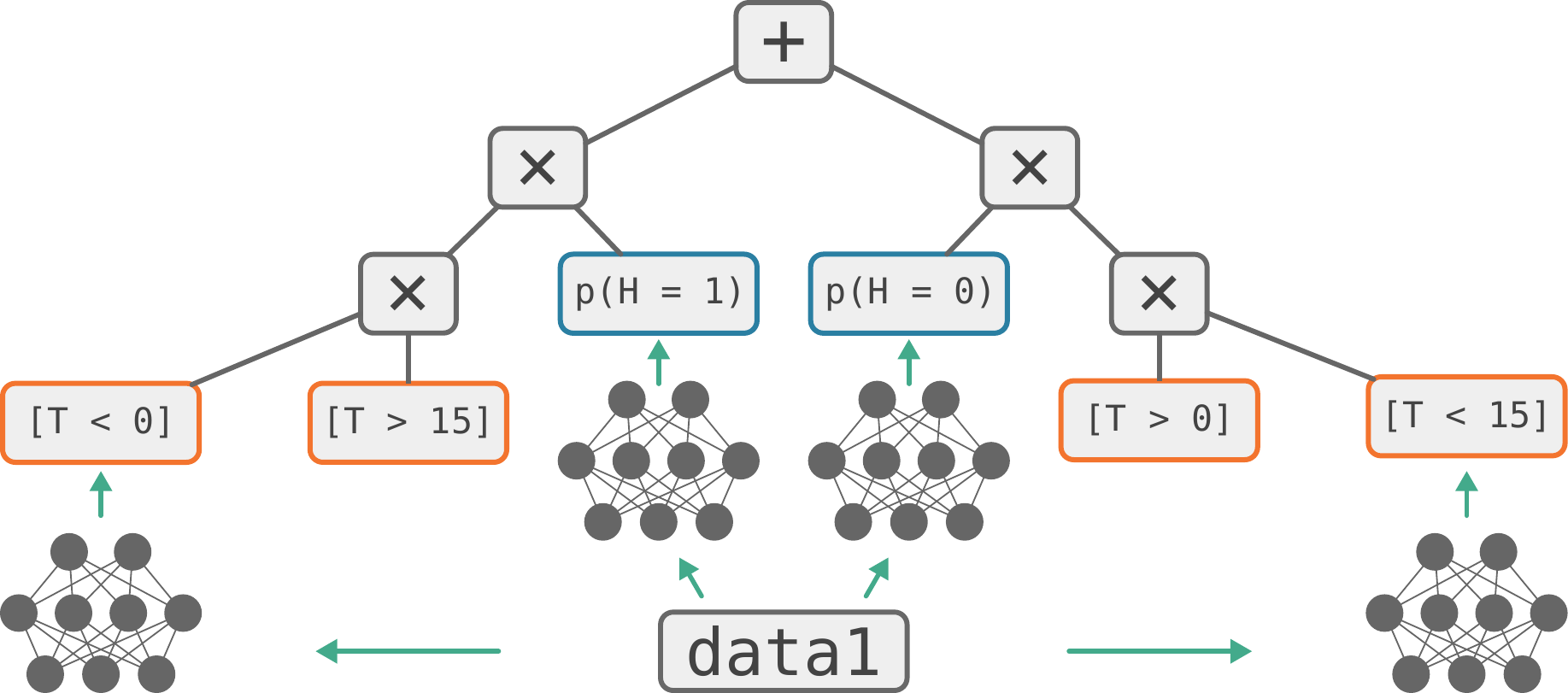}
    \caption{Diagrammatic representation of the result of knowledge compilation for the query in Example~\ref{ex:dsplprogram}.
    The \textcolor{celadon_blue}{blue} boxes originate from PCFs over discrete variables, while the \textcolor{orange_red}{orange} ones are PCFs over continuous variables. Note how the discrete variable PCFs are reduced to their exact probabilities while the continuous PCFs still need to be inferred.
    }
    \label{fig:amcdiagram2}
\end{figure}

We stress that the Sampo algorithm only samples random variables whose expected value with respect to the function $\amc(\variables{x})$ can not be computed exactly. 
Hence, in the absence of continuous random variables, our implementation of \dspl using Sampo coincides with \dpl on both a semantics level (Proposition~\ref{proposition:deepproblogspecial}) and inference level.

\subsection{Learning via differentiation}\label{subsec:learning}

A \dspl program depend on a set of (neural) parameters
$\Neuralparams$ (Definition~\ref{def:ndf}).
In order to optimise these parameters, we need to take their gradients of a loss function that compares the probability $P(q)$ to a training signal. More precisely, we need to compute the derivative
\begin{align}
     \deriv \mathcal{L}(P_{\Neuralparams}(q)) =
    \partial_{P_{\Neuralparams}(q)} \mathcal{L}(P_{\Neuralparams}(q))\cdot \deriv P_{\Neuralparams}(q), 
\end{align}
where we explicitly indicate the dependency of the probability on $\Neuralparams$ and $\neuralparam\in \Neuralparams$.
Differentiating $P_{\Neuralparams}(q)$ with respect to $\lambda$ presents two obstacles.
First,
the question of differentiating through the sampling  process of Equation~\ref{eq:probability_query_approx}
and second,
the non-differentiability of the indicator functions in $\amc(\variables{x})$.

The non-differentiability of sampling is tackled using the reparametrisation trick~\citep{ruiz2016generalized}.
Reparametrisation offers better estimates than other approaches, such as REINFORCE~\citep{williams1992simple} and is readily utilised in modern probabilistic programming languages such as Tensorflow Probability~\citep{tran2017deep} and Pyro~\citep{bingham2019pyro}.
Conversely, the non-differentiability of the indicator functions prevents swapping the order of differentiation and integration~\citep{flanders1973differentiation}, which we resolve by applying continuous relaxations following the work of~\citet{petersen2021learning}.
Together, we obtain the gradient estimate
\begin{align}
    \deriv P_{\Neuralparams}(q)
    &= 
    \deriv 
    \int \amc(\variables{x}) \cdot  \weight_{\Neuralparams}(\variables{x}) \, \differential\variables{x} \\
    &\approx\ 
    \int
    \left[
        \deriv \amc_\softened(\reparam)
    \right] \cdot 
    p(\variables{u})
    \ \differential \variables{u}
    \label{eq:probability_derivative}
    ,
\end{align}
where the subscript $s$ in $\amc_\softened(\variables{x}) $ denotes the continuously relaxed or \q{softened} version of $\amc(\variables{x})$ and $r(\variables{u}, \Neuralparams)$ is the reparametrisation function.

\paragraph{Our gradient estimate using relaxations is asymptotically unbiased.}
As an example of these relaxations, consider the indicator of a PCF $(g(\boldsymbol{x}) > 0)$, which is relaxed into the sigmoid $\sigma(\coolness\cdot g(\boldsymbol{x}))$.
Appendix~\ref{app:detaileddiff} provides more details on relaxations of general PCFs.
The \textit{coolness} parameter $\coolness\in \mathbb{R}_+^0$
determines the strictness of the relaxation. Hence, we recover the hard indicator function when $\coolness{\rightarrow} +\infty$.
Note that relaxing indicator functions 
introduces bias.
\citet{petersen2021learning} already stated in their work that, in the infinite coolness limit, a relaxed function coincides with the non-relaxed one. Proposition~\ref{proposition:unbiasedapprox} extends this result
to the derivatives of relaxed and non-relaxed functions, proving that our gradient estimate is asymptotically unbiased.

\begin{restatable}[Unbiased in the infinite coolness limit]{proposition}{unbiasedapprox}
\label{proposition:unbiasedapprox}
Let $\mathbb{P}$ be a \dspl program with PCFs $(g_i(\boldsymbol{x}) \bowtie 0)$ and corresponding coolness parameters $\coolness_i$. \\
If all $\deriv (g_i \circ r)$ are locally integrable over $\mathbb{R}^k$ and
every $\coolness_i \rightarrow +\infty$,
then we have, for any query atom $q$, that
\begin{align}
    \deriv P(q)
    =
    \int
    \deriv\amc_\softened(\reparam)
    \cdot p(\variables{u})
    \ \differential \variables{u}
    .
\end{align}
\end{restatable}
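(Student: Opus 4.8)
The plan is to establish the identity in two moves: first push the derivative inside the integral on the relaxed side, then take the coolness limit inside that integral. Concretely, I would start from the approximate gradient expression in Equation~\ref{eq:probability_derivative}, namely $\int \deriv \amc_\softened(\reparam) \cdot p(\variables{u}) \, \differential\variables{u}$, and argue that this integral converges, as every $\coolness_i \to +\infty$, to $\deriv \int \amc(\variables{x}) \weight_{\Neuralparams}(\variables{x}) \, \differential \variables{x} = \deriv P(q)$. Since $\amc_\softened$ is a fixed polynomial in the relaxed indicators $\sigma(\coolness_i \cdot g_i(\variables{x}))$ (and analogous relaxations for the other comparison operators, as detailed in Appendix~\ref{app:detaileddiff}), its $\neuralparam$-derivative is, by the chain and product rules, a finite sum of terms each of which is a product of sigmoid factors, sigmoid-derivative factors $\sigma' = \sigma(1-\sigma)$ carrying a factor $\coolness_i$, and the classical derivatives $\deriv(g_i \circ r)$. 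This is the object whose limiting behaviour I must control.

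\textbf{Key steps.} (1) Fix $\variables{u}$ and study the pointwise limit of $\deriv \amc_\softened(\reparam)$ as the coolnesses go to infinity. Away from the \emph{boundary set} $\bigcup_i \{\variables{u} : g_i(r(\variables{u},\Neuralparams)) = 0\}$, each $\sigma(\coolness_i g_i)$ tends to the hard indicator $\indicator(g_i \bowtie 0)$, while each $\sigma'(\coolness_i g_i) \cdot \coolness_i \to 0$ because the exponential decay of $\sigma'$ beats the linear growth in $\coolness_i$; hence every term in $\deriv \amc_\softened$ that contains at least one $\sigma'$ factor vanishes in the limit, and one is left with $\deriv \amc(\reparam)$ wherever that classical derivative exists — which, by the chain rule through the hard indicators, is almost everywhere, the hard $\amc$ being piecewise constant in the directions that matter. (2) Argue the boundary set $\{g_i \circ r = 0\}$ has Lebesgue measure zero under the validity/measurability hypothesis on the PCFs together with the local integrability of $\deriv(g_i \circ r)$, so the pointwise limit $\deriv \amc_\softened(\reparam) \to 0$ holds $p(\variables{u})$-almost everywhere (the hard $\amc$ is locally constant a.e., so its a.e. derivative is $0$). (3) Upgrade the pointwise limit to a limit of integrals via dominated convergence: bound $|\deriv \amc_\softened(\reparam)|$ by an integrable majorant independent of the coolnesses. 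Here I would use that $\amc_\softened$ is a polynomial with bounded coefficients in factors each in $[0,1]$, that $\sup_t |\coolness\sigma'(\coolness t)| = \coolness/4$ is \emph{not} uniformly bounded — so the naive bound fails, and one must instead exploit that $\coolness\sigma'(\coolness g_i)$ is supported, up to exponentially small mass, near $\{g_i = 0\}$ and pair it with the local integrability of $\deriv(g_i \circ r)$ to get a majorant of the form $C \sum_i |\deriv(g_i \circ r)(\variables{u})| \cdot \psi(\variables{u})$ for a fixed integrable $\psi$. (4) Separately, justify differentiation under the integral sign for the relaxed quantity — i.e.\ that $\deriv \int \amc_\softened \, \weight_{\Neuralparams} \, \differential \variables{x}$ really equals $\int \deriv \amc_\softened(\reparam) \, p(\variables{u}) \, \differential \variables{u}$ — which is standard since after reparametrisation the integrand is genuinely differentiable in $\neuralparam$ with a dominating derivative (this is exactly the point of the reparametrisation trick, cf.\ \citet{flanders1973differentiation}). (5) Combine (3) and (4) with the fact from \citet{petersen2021learning} that $\amc_\softened(\reparam) \to \amc(\reparam)$ in the infinite-coolness limit to conclude $\int \deriv \amc_\softened(\reparam) \, p(\variables{u}) \, \differential\variables{u} \to \deriv \int \amc(\variables{x}) \weight_{\Neuralparams}(\variables{x}) \, \differential\variables{x} = \deriv P(q)$, which is the claim.

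\textbf{The main obstacle} is step (3): the derivative of a relaxed indicator is not uniformly bounded across coolness values — the spike $\coolness_i \sigma'(\coolness_i g_i)$ grows linearly in $\coolness_i$ — so a crude dominating function does not exist, and the whole argument hinges on trading that growing spike against the (fixed, coolness-independent) local integrability of $\deriv(g_i \circ r)$, essentially a delta-sequence / approximate-identity estimate. Getting this majorant right, and confirming that the product structure of $\amc_\softened$ doesn't produce cross terms with \emph{several} simultaneous spikes that evade the bound (they occupy an even smaller region, so they should be fine, but this needs checking when the PCFs' zero sets intersect), is where the real work lies. The measure-zero claim for $\bigcup_i\{g_i \circ r = 0\}$ in step (2) is the secondary subtlety: it follows from the PCFs being valid in the sense of Definition~\ref{def:pcf} together with local integrability of $\deriv(g_i\circ r)$, but one should state the precise regularity of $g_i$ and $r$ under which the zero set is null — e.g.\ that it is not a set of positive measure on which $g_i \circ r$ is constant — and I would isolate that as a clean lemma before invoking it.
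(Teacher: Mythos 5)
Your plan has a structural flaw that, if carried through, would prove the wrong statement. In step (1) you correctly note that every term of $\deriv\amc_\softened(\reparam)$ coming from the continuous relaxations carries a factor $\coolness_i\,\sigma'(\coolness_i\, g_i(\reparam))\cdot\deriv(g_i\circ r)$, and that each such term tends to zero pointwise away from the zero sets $\{g_i\circ r=0\}$; in step (2) you conclude that the integrand tends to $0$ almost everywhere. If the coolness-independent integrable majorant you seek in step (3) existed, dominated convergence would then force $\lim\int\deriv\amc_\softened(\reparam)\,p(\variables{u})\,\differential\variables{u}=0$, i.e.\ the proposition would assert $\deriv P(q)=0$. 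But $\deriv P(q)$ is generically nonzero (for a single PCF $x>0$ with $x\sim\mathcal{N}(\mu,1)$ one has $P(q)=\Phi(\mu)$ and $\partial_\mu P(q)=\phi(\mu)$), so no such majorant can exist --- indeed $\sup_{\coolness}\coolness\,\sigma'(\coolness t)\sim c/\abs{t}$ near $t=0$, which is not locally integrable across a codimension-one zero set. The pointwise-limit-plus-DCT scaffolding therefore cannot be repaired: the terms you discard as vanishing are exactly the ones that carry the entire gradient. The families $\coolness_i\,\sigma'(\coolness_i g_i)$ converge to $0$ pointwise but to $\delta(g_i)$ only weakly, and the limit of the integrals is the resulting surface integral over $\{g_i\circ r=0\}$, not the integral of the pointwise limit.

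The paper's proof takes precisely the route your closing paragraph only hints at and makes it the whole argument. It rewrites $P(q)$ using Heaviside distributions $H(g_i(\reparam))$, differentiates $P(q)$ itself in the distributional sense --- producing pairings $\left\langle \delta\circ g_j\circ r,\ \deriv(g_j\circ r)\cdot\pi_j\cdot p\right\rangle$, which are well defined exactly because $\deriv(g_j\circ r)$ is assumed locally integrable --- and then shows term by term that $\coolness_j\,\sigma'(\coolness_j g_j)\to\delta(g_j)$ and $\sigma(\coolness_i g_i)\to H(g_i)$ in the distributional sense, so the relaxed gradient converges to those same pairings. The local-integrability hypothesis thus serves to make the delta pairings meaningful, not to furnish a dominating function, and your worry about the measure of $\bigcup_i\{g_i\circ r=0\}$ becomes moot. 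To salvage your draft, replace steps (1)--(3) and (5) by a weak-convergence (approximate-identity) argument for each $\sigma'$ spike tested against $\deriv(g_j\circ r)\cdot\pi_j\cdot p$, and identify the resulting sum of delta pairings with the distributional derivative of the Heaviside representation of $P(q)$; your step (4), justifying differentiation under the integral sign for the smooth relaxed integrand, is fine as it stands.
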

\begin{proof}
The proof makes use of the mathematical theory of distributions~\citep{schwartz1957theorie}, which generalise the concept of functions, and is given in Appendix~\ref{app:unbiasedness}.
\end{proof}

Finally, we obtain a practical and unbiased estimate of $\deriv P_{\Neuralparams}(q)$ using a set of samples $\mathcal{U}$ drawn from 
$p(\variables{u})$.
\begin{align}
    \deriv P(q)
    &\approx\ 
    \int
        \left[
            \deriv      \amc_\softened(\reparam) 
        \right]
        \cdot p(\variables{u})
    \ \differential \variables{u} \\
    &\approx\ 
    \frac{1}{|\mathcal{U}|} \sum_{\variables{u} \in \mathcal{U}} \deriv \amc_\softened(\reparam).
    \label{eq:WMIDerFinalApprox}
\end{align}
Computing this gradient estimate does not require drawing new samples. Implementing the relaxations of PCFs in a \q{straight-through} manner allows us to directly apply automatic differentiation on the inferred probability.

\subsection{Probabilistic programming connections}

Since knowledge compilation symbolically infers discrete random variables, we only have to sample from a continuous joint probability distribution.
To sample such distributions, we can fully exploit the advanced inference and learning techniques~\citep{hoffman2014no} of modern probabilistic programming languages~\citep{tran2017deep, bingham2019pyro}. Our implementation of \dspl utilises Tensorflow Probability for this task, effectively using knowledge compilation as a differentiable bridge between logical and probabilistic reasoning.
While this bridge is limited to sampling techniques for now, it presents an interesting direction for future work to completely unify NeSy with DPP.

\subsection{Limitations}\label{subsec:limitations}

While the use of relaxations is well-known and used in recent gradient estimators~\citep{tucker2017rebar, grathwohl2018backpropagation}, the bias they introduce is often hard to deal with in practice.
In our case, this bias only reduces to zero in the infinite coolness limit (Proposition~\ref{proposition:unbiasedapprox}), meaning the use of annealing can be necessary. Finding a good annealing scheme for any problem is non-trivial and effectively introduces another component in need of optimisation. However, as relaxations allow the use of the reparametrisation trick, the resulting lower variance estimates together with our theoretical guarantees support our choice. A more detailed discussion of the current limitations of \dspl can be found in Appendix~\ref{app:limitations}.

\section{Related work}
\label{sec:related}

From a NeSy perspective the formalism most closely related to \dspl is that of {\em Logic Tensor Networks} (LTNs)~\citep{badreddine2022logic}. The main difference between LTNs and \dspl is the fuzzy logic semantics of the former and the probabilistic semantics of the latter.
Interestingly, LTNs and other NeSy approaches based on fuzzy logic also require relaxations to incorporate continuous values.
However, fuzzy-based approaches require these relaxations at the semantics level, in contrast to \dspl.
Even more, they can only compare continuous point values instead of more general continuous random variables.
LTNs' fuzzy semantics also exhibit drawbacks on a more practical level.
Unlike \dspl with its probabilistic semantics, LTNs are not inherently capable of neural-symbolic generative modelling (Section~\ref{subsec:vae}). 
For a broader overview of the field of neural-symbolic AI, we refer the reader to a series of survey papers that have been published in recent years~\citep{garcez2019neural, marra2021statistical,garcez2022neural}.

From a probabilistic programming perspective, \dspl is related to languages that handle discrete and continuous random variables such as \emph{BLOG}~\citep{milch2006probabilistic}, \emph{Distributional Clauses}~\citep{gutmann2011magic} and \emph{Anglican}~\citep{tolpin2016design}, which have all been given declarative semantics, i.e., the meaning of the program does not depend on the underlying inference algorithm. However, these languages have the drawback of non-differentiability.
This drawback stands in stark contrast to end-to-end (deep) probabilistic programming languages such as Pyro~\citep{bingham2019pyro} or Tensorflow Probability~\citep{dillon2017tensorflow}, but these have only been equipped with operational semantics and do not support logical constraints.
\dspl not only introduces the ability to express such logical constraints in the form of PCFs to construct challenging posterior distributions, but
does so in an end-to-end differentiable fashion.

Finally, our gradient estimate can be related to relaxation-based methods like REBAR~\citep{tucker2017rebar} or RELAX~\citep{grathwohl2018backpropagation}, but without the REINFORCE~\citep{williams1992simple} inspired component. Instead, we utilise the differentiability of knowledge compilation to obtain exact gradients of discrete variables.
Since our inference scheme innately requires knowledge compilation, the use of other discrete gradient estimators like~\citep{niepert2021implicit} does not directly apply to \dspl.
Moreover, we exploit the structure of our problem by directly relaxing comparison formulae in a sound manner~\citep{petersen2021learning}, in contrast to introducing an artificial relaxation of the whole problem~\citep{grathwohl2018backpropagation}.

\section{Experimental Evaluation}\label{sec:experiments}

We illustrate the versatility of \dspl by tackling three different problems. Section~\ref{subsec:dates} discusses the detection of handwritten dates without location supervision. In Section~\ref{subsec:hybridnet} a hybrid Bayesian network with conditional probabilities dependent on the satisfaction of certain logical constraints will be optimised. Finally, Section~\ref{subsec:vae} introduces neural-symbolic variational auto-encoders, inspired by~\citet{misino2022vael}. 

The details of our experimental setup, including the precise \dspl programs, coolness annealing schemes and hyperparameters used for the neural networks are given in Appendix~\ref{app:moreexperiments}.  

\subsection{Neural-symbolic attention}\label{subsec:dates}

A problem that cannot yet be solved to a satisfactory degree by purely neural or other neural-symbolic systems is detecting handwritten years.
Given a single image with a handwritten year, the task is to predict the correct year as a sequence of 4 digits together with the location of these digits (Figure~\ref{fig:probattention}, left).
This year can be anywhere in the image and the only supervision is in the digits of the year, \emph{not} where these digits are in the image. In other words, the problem is equivalent to object detection \emph{without} bounding box supervision.

Solving such a problem seems to be out of scope for current methods.
On the one hand, existing neural approaches are often complex pipelines of neural components that break end-to-end differentiability~\citep{seker2022generalized}.
On the other hand, current neural-symbolic methods lack sufficient spatial reasoning capabilities in order to perform the necessary image segmentation.

We exploit probabilistic programming by modelling the location of a digit as a deep generalised normal distribution~\citep{nadarajah2005generalized}. 
That is, we use a convolutional neural network to regress the parameters of four generalised normal distributions, one for each digit of a year.
Then, we take inspiration from the spatial transformer literature~\citep{carion2020end} and convert the distribution of each location to an attention map (Figure~\ref{fig:probattention}, right).

\begin{figure}
    \centering
    \includegraphics[width=0.35\linewidth]{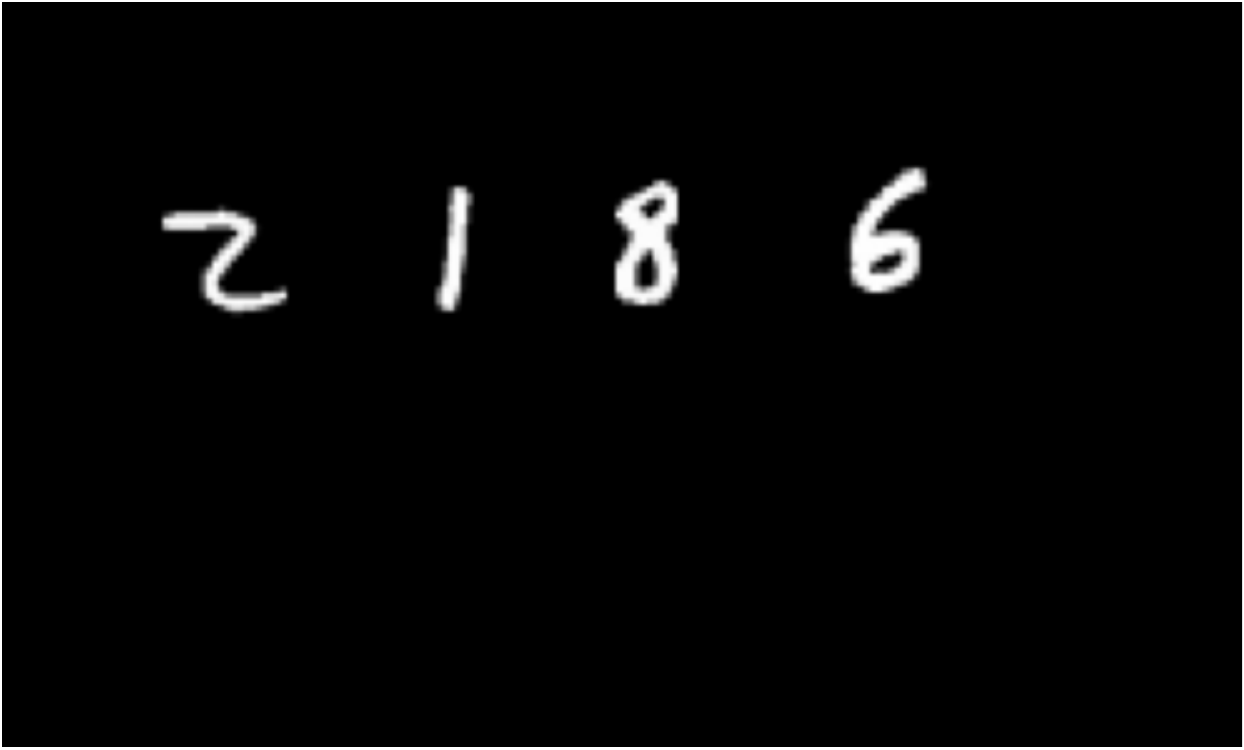}
    \hspace{0.10\linewidth}
    \includegraphics[width=0.35\linewidth]{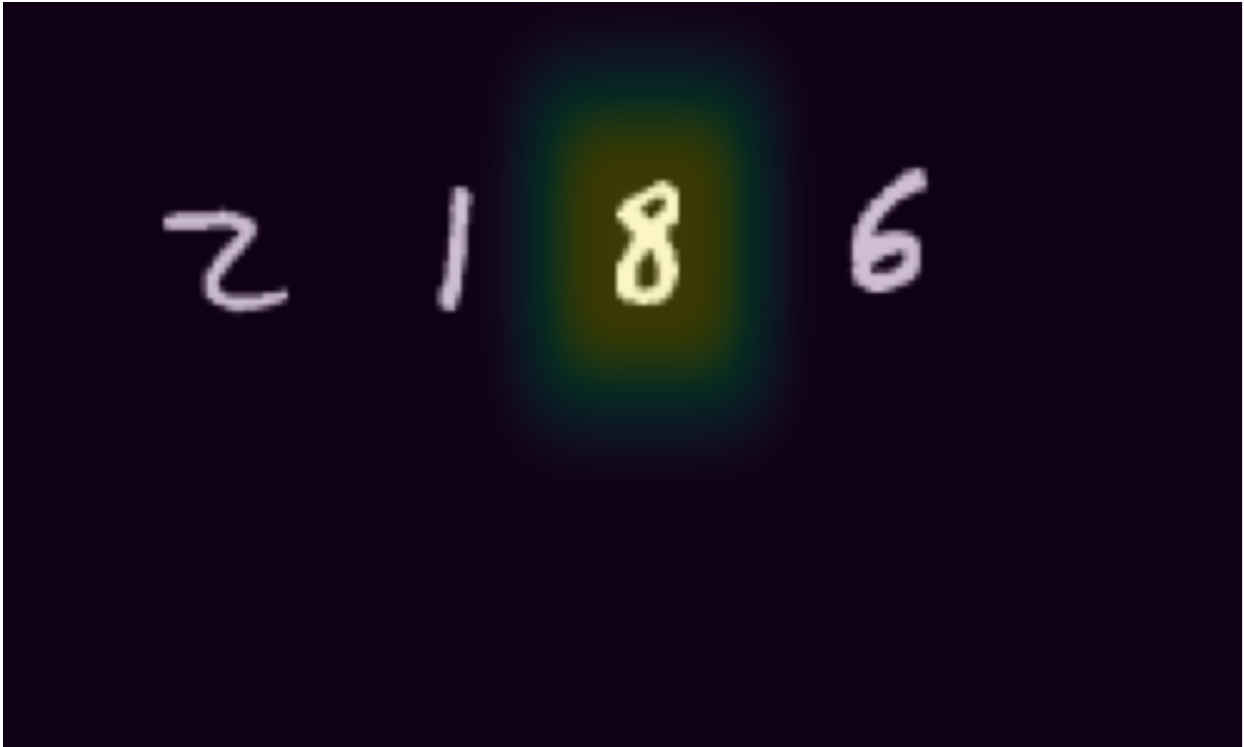}
    \caption{
    On the left, an example of a handwritten year.
    On the right, the attention map for the digit \q{8} as a generalised normal distribution.
    Intuitively, we can view generalised normal distributions as differentiable bounding boxes. This allows gradients to flow from a downstream classification network to the regression component.
    }
    \label{fig:probattention}
\end{figure}

In our experimental validation we compare \dspl to a neural baseline and logic tensor networks.
The neural baseline applies the four probabilistic attention maps, one for the location of each of the four digits, to the input image.
The resulting four attenuated images are then passed on to a classification network without additional reasoning.
The network is simply required to predict the digits in the right order.
With \dspl, we encode that a year is a sequence of digits, i.e., the order matters, by enforcing an explicit order on the digit locations.
Doing so requires spatial reasoning, i.e., reasoning which digit is at which location.
For LTNs, we encode the same information.
However, as LTNs lack a proper distribution semantics, they can only reason on the level of the expected values of the generalised normal distributions. 

In our experiment, the sets of years appearing in the training, validation and test data are all disjoint.
Moreover, the sets of handwritten digits used to generate those years are also disjoint.
Partitioning the data in such a way leads to a challenging learning problem; 
the difficulty lies in out-of-distribution inference, as the years and handwritten digits in the validation and test set have never been seen during training.

We evaluate all methods in terms of accuracy and Intersection-over-Union (IoU).
For the accuracy, we compare the sequence of predicted digits to the correct sequence of digits constituting a year.
A prediction is correct if \emph{all} digits are correctly predicted in the right order.
For the IoU, we map each predicted generalised normal distribution to a bounding box by using the mean as the centre and the scale parameter as the width of the box. The IoU is then given by the overlap between this box and the true location of the handwritten digit.

\begin{table}[t]
    \caption{Mean accuracy and IoU with standard error for classifying the correct year, taken over 10 runs.
    }
    \label{table:yearresults}
    \begin{center}
    \begin{tabular}{ccc}
        \toprule
        Method & \multicolumn{2}{c}{Results} \\
        \midrule
        & acc. & IoU \\
        \cmidrule{2 - 3}
        \dspl & $93.77\pm 0.57\phantom{0}$ & $17.69\pm 0.23$  \\

        LTN &  $76.50\pm 12.10$ & $10.73\pm 1.69$ \\
        
        Neural Baseline &  $54.71\pm 14.33$ & \phantom{0}$6.26\pm 1.77$ \\

        
        \bottomrule
    \end{tabular}
    \end{center}
\end{table}

We present our results in Table~\ref{table:yearresults}.
The most striking observation is the poor performance and large variance of the neural baseline. 
It fails to predict the location of the digits in the right order, as can be seen from the lower IoU values.
Since classification depends on the predicted locations, these lower values also explain the lack in accuracy.
We can conclude that the neural baseline struggles to generalise to out-of-distribution data.
While LTNs fare better, the high standard error on the accuracy indicates that their continuous reasoning capabilities are insufficient to guarantee consistent solutions.
\dspl distinguishes itself by a higher and more consistent accuracy.
The reason is also clear; \dspl exploits the entire domain of the distribution of each location. This then leads to a higher IoU value that in turn results in a higher accuracy.


\subsection{Neural hybrid Bayesian networks}\label{subsec:hybridnet}

Hybrid Bayesian networks~\citep{lerner2003hybrid} are probabilistic graphical models that combine discrete and continuous random variables.
\dspl allows for the introduction of optimisable neural components and logical constraints to such models, as shown in Example~\ref{ex:dsplprogram}.
We further extend this example (Figure~\ref{fig:weathergraph}) and specify the datasets that form the input to the various neural networks.
The temperature is predicted from a real meteorological dataset~\citep{cho2020comparative} and we use CIFAR-10 images as proxies for observing clouds and humidity.
Moreover, dependencies on a number of constraints are added, which goes beyond the capabilities of traditional probabilistic programming.

\begin{figure}
    \centering
    \includegraphics[width=0.25\linewidth]{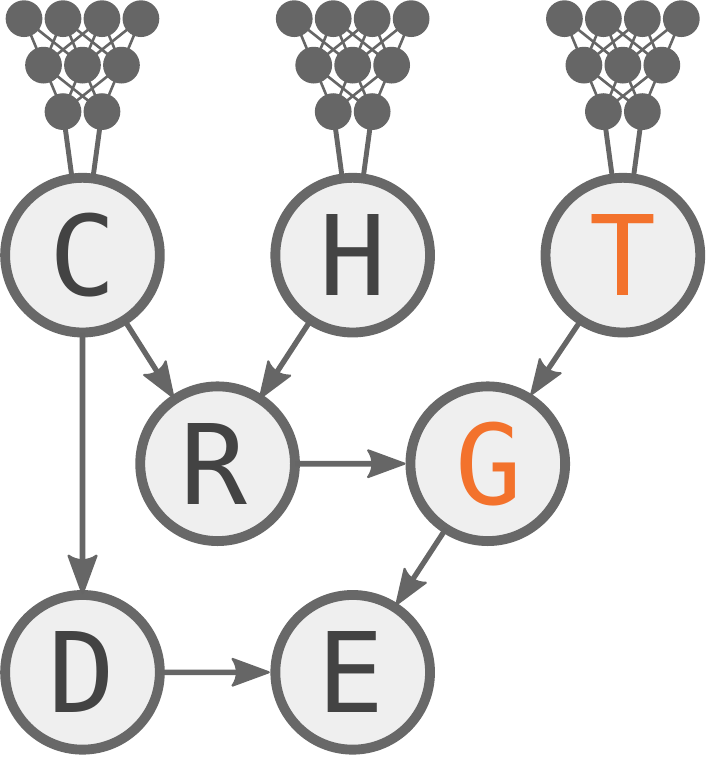}
    \caption{Graphical model of \textbf{E}njoying the weather (\textbf{E}).(\textbf{E}) holds when \textbf{D}epressed (\textbf{D}) is not true and there is \textbf{G}ood weather (\textbf{G}).
    A person has a higher probability of being depressed when it is \textbf{C}loudy (\textbf{C}), while the degree of good weather is beta distributed depending on various logical constraints on \textbf{T}emperature (\textbf{T}) and \textbf{R}ain (\textbf{R}).
    Finally, rain is probable when it is both \textbf{C}loudy and \textbf{H}umid (\textbf{H}).}
    \label{fig:weathergraph}
\end{figure}

Our neural Bayesian model was optimised by only giving probabilistic supervision on whether \textbf{E} was true or false, i.e., the weather was enjoyed or not.
Given our model, such distant supervision only translates into a learning signal on different \emph{ranges} of temperature values that satisfy different PCFs. We will see that \dspl's reasoning over the full domain of the temperature distribution allows it to perform meaningful density estimation from such a signal.

The optimised Bayesian model can be evaluated in two ways. First, the accuracy on CIFAR-10 of the networks utilised in \textbf{C}loudy and \textbf{H}umid, which were
$95.24\pm 3.32$ and $98.96\pm 0.11$, respectively.
Second, we measure the quality of the density estimation on \textbf{T}emperature by looking at the MSE between the true and predicted mean values, which was 
$0.1799 \pm 0.0139$
.
Importantly, \dspl was able to approximate the standard deviation of \textbf{T}emperature from just the distant supervision, deviating by only
$0.60\pm 0.22$. 


\subsection{Neural-symbolic variational auto-encoder}\label{subsec:vae}

Probabilistic programming is well-suited to generative tasks, but it can not perform generation conditioned on logical constraints.
Inspired by the work of~\citet{misino2022vael}, we showcase how \dspl extends the generative power of probabilistic programming to such constraints.
To this end, we will consider the task of learning to generate 2 images of digits given the value of their subtraction.

A diagrammatic overview of our \dspl program is given in Figure~\ref{fig:vae_diagram}.
It uses a conditional variational auto-encoder (CVAE)~\citep{sohn2015learning} to generate images conditioned on a digit value.
\dspl finds those digit values from a given subtraction result by logical reasoning.
It can also condition generation on other variables in the CVAE latent space as this space is an integral part of \dspl's deep, relational model.
We will exploit this property later on when we extend the task to generating digits in the same writing style as a given image without \emph{any} additional optimisation.

\begin{figure}[ht]
    \centering
    \includegraphics[width=0.7\linewidth]{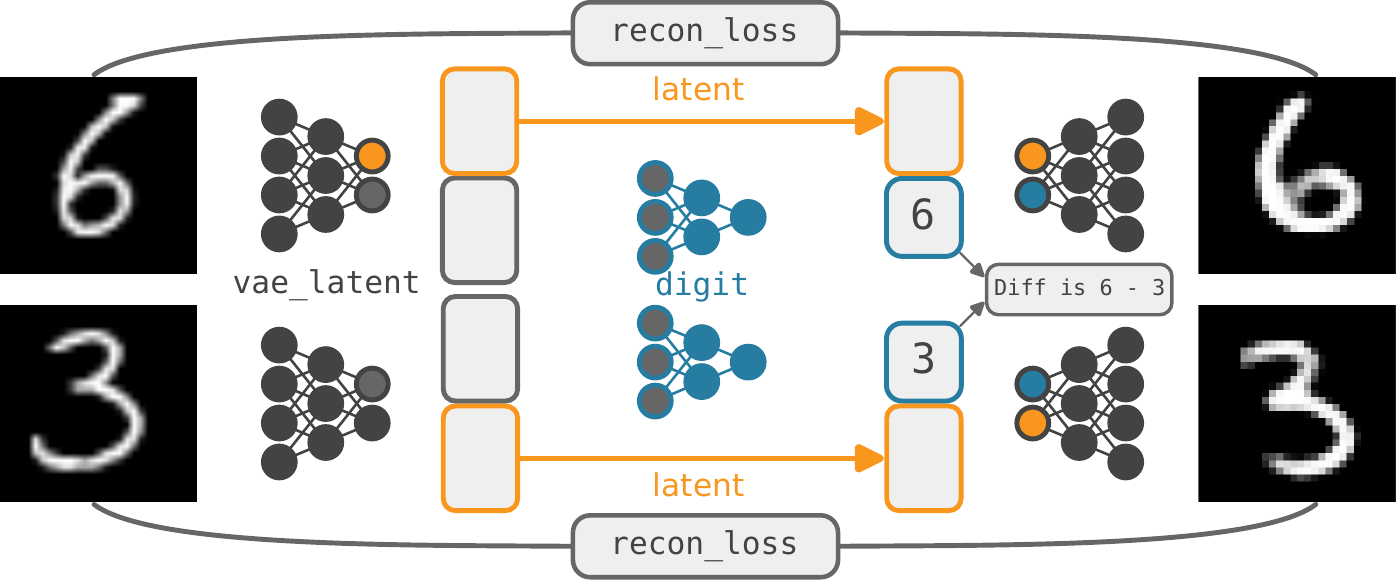}
    \caption{
    Given example pairs of images and the value of their subtraction, e.g., $(\inlineimage{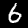}, \inlineimage{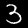})$ and $3$, the CVAE encoder \probloginline{vae_latent} first encodes each image into a multivariate normal NDF (\textcolor{orange_red}{\probloginline{latent}}) and a latent vector. The latter is the input of a categorical NDF \textcolor{celadon_blue}{\probloginline{digit}}, completing the CVAE latent space. Supervision is dual; generated images are compared to the original ones in a probabilistic reconstruction loss, while both digits need to subtract to the given value.
    }
    \label{fig:vae_diagram}
\end{figure}

Both the CVAE and digit classifier are successfully trained jointly. Example generations of images that satisfy the subtraction result
$
\inlineimage{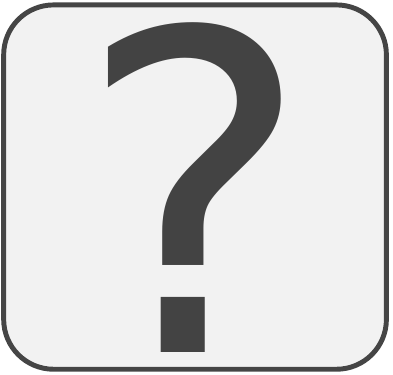}
{-}
\inlineimage{Imagery/generation.pdf}
= 5
$
can be seen below. In general, \dspl finds all possible digits that subtract to a given value and generates images for each correct combination. Below, we left out 2 such combinations for clarity of exposition.

\begin{figure}[ht]
\includegraphics[width=0.12\linewidth]{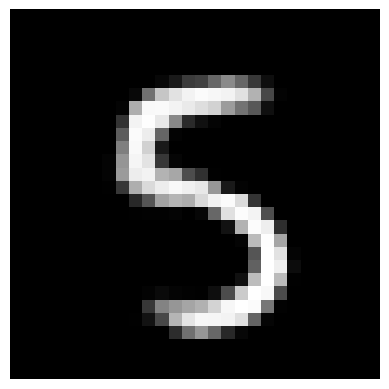}
\includegraphics[width=0.12\linewidth]{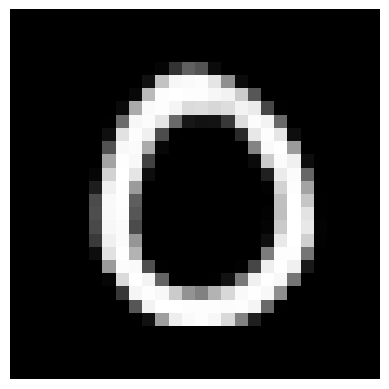}
\hfill
\includegraphics[width=0.12\linewidth]{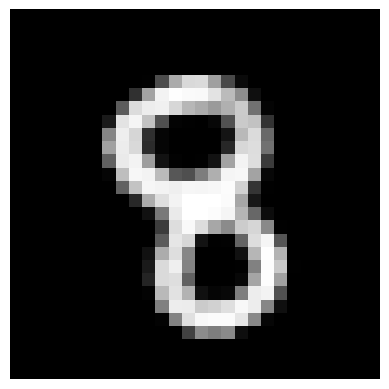} 
\includegraphics[width=0.12\linewidth]{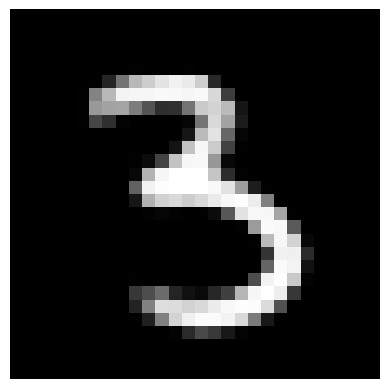}
\hfill
\includegraphics[width=0.12\linewidth]{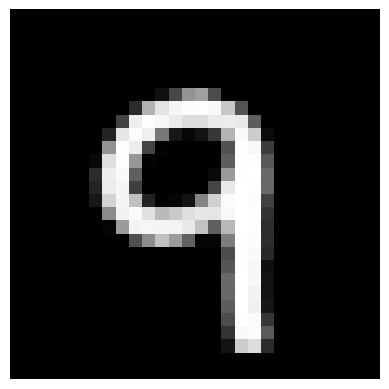} 
\includegraphics[width=0.12\linewidth]{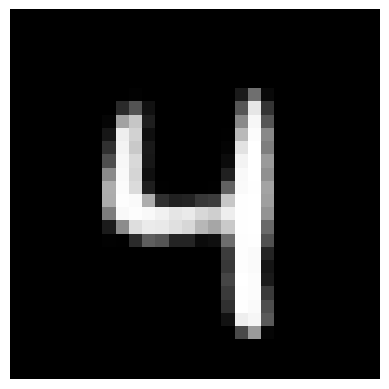}
\end{figure}

While our program is inspired by the VAEL architecture of \citet{misino2022vael}, conceptual differences exist.
Most notably, for VAEL, the image generation resides outside the probabilistic logic program. Conversely, the CVAE, including its latent space, is explicitly declared and accessible in \dspl.
This difference allows \dspl to generalise to conditional generative queries that differ significantly from the original optimisation task.
For example, we can \emph{zero-shot} query the program to fill in the blank in 
$
\inlineimage{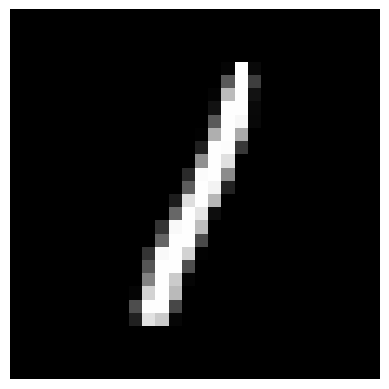}
{-}
\inlineimage{Imagery/generation.pdf}
= \text{\probloginline{Diff}}$
instead of the two blanks of the learning task
$
\inlineimage{Imagery/generation.pdf}
{-}
\inlineimage{Imagery/generation.pdf}
= \text{\probloginline{Diff}}$.
Even more, we can enforce that the generated digit is in the same writing style as the given digit by conditioning the generation on the latent space of the given image (Figure~\ref{fig:querygens}).
\begin{figure}[ht]
\begin{minipage}{0.16\linewidth}
\includegraphics[width=\linewidth]{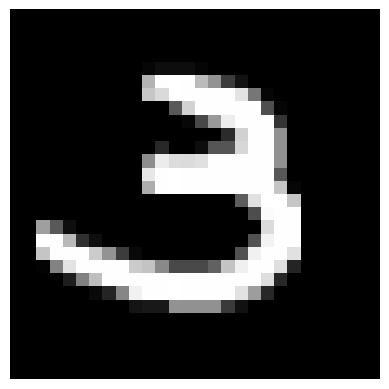}
\vspace{5pt}
\centering
\includegraphics[width=0.30\linewidth]{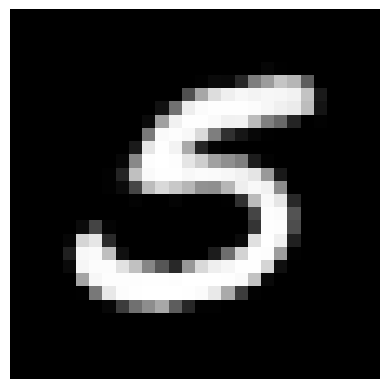}
\includegraphics[width=0.30\linewidth]{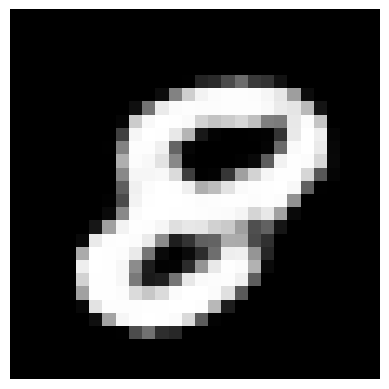}
\includegraphics[width=0.30\linewidth]{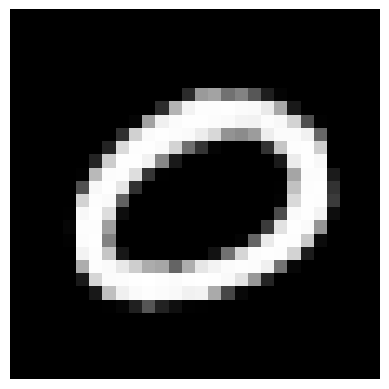}
\end{minipage}
\hfill
\begin{minipage}{0.16\linewidth}
\includegraphics[width=\linewidth]{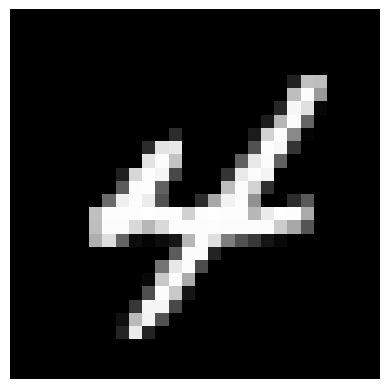}
\vspace{5pt}
\centering
\includegraphics[width=0.30\linewidth]{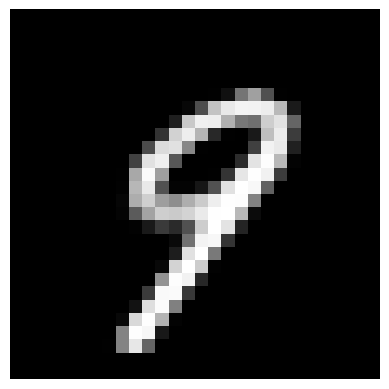}
\includegraphics[width=0.30\linewidth]{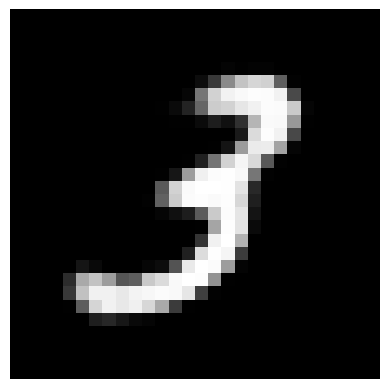}
\includegraphics[width=0.30\linewidth]{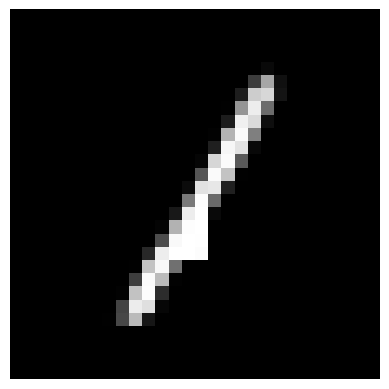}
\end{minipage}
\hfill
\begin{minipage}{0.16\linewidth}
\includegraphics[width=\linewidth]{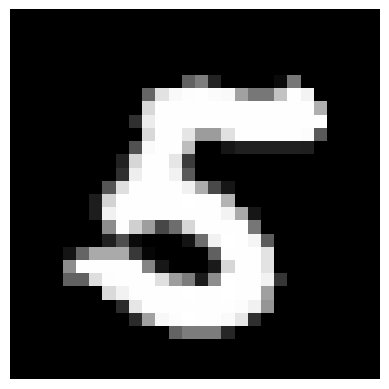}
\vspace{5pt}
\centering
\includegraphics[width=0.30\linewidth]{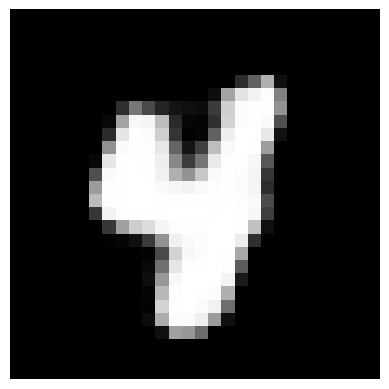}
\includegraphics[width=0.30\linewidth]{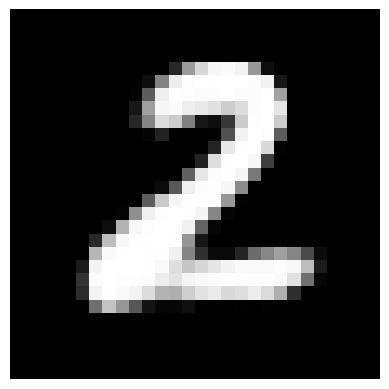}
\includegraphics[width=0.30\linewidth]{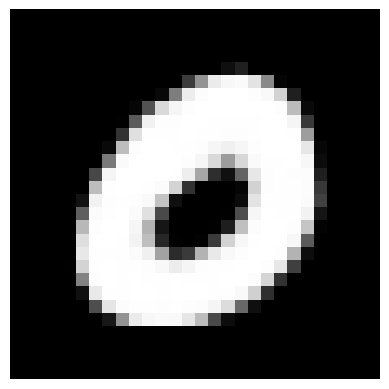}
\end{minipage}
\hfill
\begin{minipage}{0.16\linewidth}
\includegraphics[width=\linewidth]{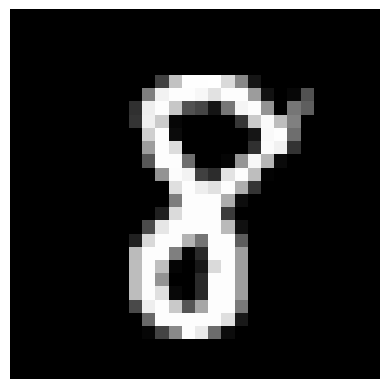}
\vspace{5pt}
\centering
\includegraphics[width=0.30\linewidth]{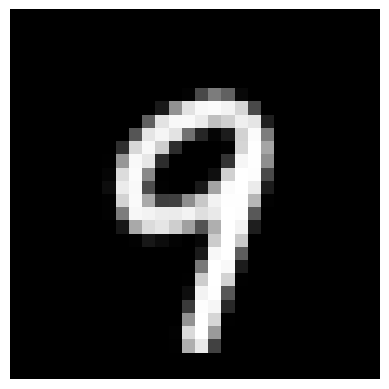}
\includegraphics[width=0.30\linewidth]{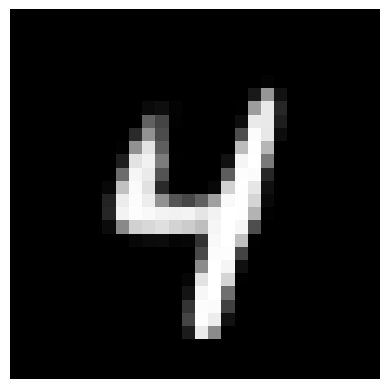}
\includegraphics[width=0.30\linewidth]{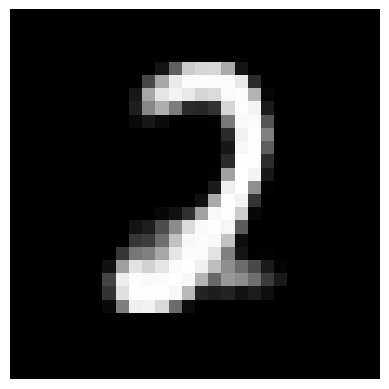}
\end{minipage}
\centering
\caption{Four random images of right digits (top row) and their generated left digits for 3 given random difference values (bottom row). Note the preservation of the style of the given minuends.}
\label{fig:querygens}
\end{figure}


\section{Conclusion}\label{sec:conclusion}

We presented \dspl, a novel neural-symbolic probabilistic logic programming language that integrates hybrid probabilistic logic and neural networks. Inference is dealt with efficiently through approximate weighted model integration while learning is facilitated by reparametrisation and continuous relaxations of non-differentiable logic components. Our experiments illustrate how \dspl is capable of intricate probabilistic modelling allowing for meaningful weak supervision while maintaining strong out-of-distribution performance. Moreover, they show how hybrid probabilistic logic can be used as a flexible structuring formalism for the neural paradigm that can effectively optimise and reuse neural components in different tasks. 

\newpage

\bibliography{references}  

\begin{thebibliography}{57}
\providecommand{\natexlab}[1]{#1}
\providecommand{\url}[1]{\texttt{#1}}
\expandafter\ifx\csname urlstyle\endcsname\relax
  \providecommand{\doi}[1]{doi: #1}\else
  \providecommand{\doi}{doi: \begingroup \urlstyle{rm}\Url}\fi

\bibitem[Abadi(2016)]{abadi2016tensorflow}
Mart{\'\i}n Abadi.
\newblock Tensorflow: learning functions at scale.
\newblock In \emph{International Conference on Functional Programming}, 2016.

\bibitem[Badreddine et~al.(2022)Badreddine, Garcez, Serafini, and
  Spranger]{badreddine2022logic}
Samy Badreddine, Artur~d'Avila Garcez, Luciano Serafini, and Michael Spranger.
\newblock Logic tensor networks.
\newblock \emph{Artificial Intelligence}, 2022.

\bibitem[Barrett and Tinelli(2018)]{barrett2018satisfiability}
Clark Barrett and Cesare Tinelli.
\newblock Satisfiability modulo theories.
\newblock In \emph{Handbook of model checking}. Springer, 2018.

\bibitem[Belle et~al.(2015)Belle, Passerini, and Van~den
  Broeck]{belle2015probabilistic}
Vaishak Belle, Andrea Passerini, and Guy Van~den Broeck.
\newblock Probabilistic inference in hybrid domains by weighted model
  integration.
\newblock In \emph{IJCAI}, 2015.

\bibitem[Bingham et~al.(2019)Bingham, Chen, Jankowiak, Obermeyer, Pradhan,
  Karaletsos, Singh, Szerlip, Horsfall, and Goodman]{bingham2019pyro}
Eli Bingham, Jonathan~P Chen, Martin Jankowiak, Fritz Obermeyer, Neeraj
  Pradhan, Theofanis Karaletsos, Rohit Singh, Paul Szerlip, Paul Horsfall, and
  Noah~D Goodman.
\newblock Pyro: Deep universal probabilistic programming.
\newblock \emph{The Journal of Machine Learning Research}, 2019.

\bibitem[Bishop(1994)]{bishop1994mixture}
Christopher~M Bishop.
\newblock Mixture density networks.
\newblock Technical report, Aston University, 1994.

\bibitem[Carion et~al.(2020)Carion, Massa, Synnaeve, Usunier, Kirillov, and
  Zagoruyko]{carion2020end}
Nicolas Carion, Francisco Massa, Gabriel Synnaeve, Nicolas Usunier, Alexander
  Kirillov, and Sergey Zagoruyko.
\newblock End-to-end object detection with transformers.
\newblock In \emph{Computer Vision--ECCV}. Springer, 2020.

\bibitem[Chavira and Darwiche(2008)]{chavira2008probabilistic}
Mark Chavira and Adnan Darwiche.
\newblock On probabilistic inference by weighted model counting.
\newblock \emph{Artificial Intelligence}, 2008.

\bibitem[Cho et~al.(2020)Cho, Yoo, Im, and Cha]{cho2020comparative}
Dongjin Cho, Cheolhee Yoo, Jungho Im, and Dong-Hyun Cha.
\newblock Comparative assessment of various machine learning-based bias
  correction methods for numerical weather prediction model forecasts of
  extreme air temperatures in urban areas.
\newblock \emph{Earth and Space Science}, 2020.

\bibitem[Darwiche and Marquis(2002)]{darwiche2002knowledge}
Adnan Darwiche and Pierre Marquis.
\newblock A knowledge compilation map.
\newblock \emph{Journal of Artificial Intelligence Research}, 2002.

\bibitem[De~Raedt and Kimmig(2015)]{deraedt2015concepts}
Luc De~Raedt and Angelika Kimmig.
\newblock Probabilistic (logic) programming concepts.
\newblock \emph{Machine Learning}, 2015.

\bibitem[De~Raedt et~al.(2007)De~Raedt, Kimmig, and Toivonen]{de2007problog}
Luc De~Raedt, Angelika Kimmig, and Hannu Toivonen.
\newblock Problog: A probabilistic prolog and its application in link
  discovery.
\newblock In \emph{IJCAI}. Hyderabad, 2007.

\bibitem[De~Raedt et~al.(2021)De~Raedt, Duman{\v{c}}i{\'c}, Manhaeve, and
  Marra]{dereadt2021statistical}
Luc De~Raedt, Sebastijan Duman{\v{c}}i{\'c}, Robin Manhaeve, and Giuseppe
  Marra.
\newblock From statistical relational to neural-symbolic artificial
  intelligence.
\newblock In \emph{IJCAI}, 2021.

\bibitem[Diligenti et~al.(2017)Diligenti, Gori, and
  Sacc{\`{a}}]{diligenti2017sbr}
Michelangelo Diligenti, Marco Gori, and Claudio Sacc{\`{a}}.
\newblock Semantic-based regularization for learning and inference.
\newblock \emph{Artif. Intell.}, 2017.

\bibitem[Dillon et~al.(2017)Dillon, Langmore, Tran, Brevdo, Vasudevan, Moore,
  Patton, Alemi, Hoffman, and Saurous]{dillon2017tensorflow}
Joshua~V Dillon, Ian Langmore, Dustin Tran, Eugene Brevdo, Srinivas Vasudevan,
  Dave Moore, Brian Patton, Alex Alemi, Matt Hoffman, and Rif~A Saurous.
\newblock Tensorflow distributions.
\newblock \emph{arXiv}, 2017.

\bibitem[Evans and Grefenstette(2018)]{evans2018learning}
Richard Evans and Edward Grefenstette.
\newblock Learning explanatory rules from noisy data.
\newblock \emph{Journal of Artificial Intelligence Research}, 2018.

\bibitem[Fierens et~al.(2015)Fierens, Van~den Broeck, Renkens, Shterionov,
  Gutmann, Thon, Janssens, and De~Raedt]{fierens2015inference}
Daan Fierens, Guy Van~den Broeck, Joris Renkens, Dimitar Shterionov, Bernd
  Gutmann, Ingo Thon, Gerda Janssens, and Luc De~Raedt.
\newblock Inference and learning in probabilistic logic programs using weighted
  boolean formulas.
\newblock \emph{Theory and Practice of Logic Programming}, 2015.

\bibitem[Flanders(1973)]{flanders1973differentiation}
Harley Flanders.
\newblock Differentiation under the integral sign.
\newblock \emph{The American Mathematical Monthly}, 1973.

\bibitem[Garcez et~al.(2019)Garcez, Gori, Lamb, Serafini, Spranger, and
  Tran]{garcez2019neural}
Artur~d'Avila Garcez, Marco Gori, Luis~C Lamb, Luciano Serafini, Michael
  Spranger, and Son~N Tran.
\newblock Neural-symbolic computing: An effective methodology for principled
  integration of machine learning and reasoning.
\newblock \emph{arXiv}, 2019.

\bibitem[Garcez et~al.(2022)Garcez, Bader, Bowman, Lamb, de~Penning, Illuminoo,
  Poon, and Zaverucha]{garcez2022neural}
Artur~d’Avila Garcez, Sebastian Bader, Howard Bowman, Luis~C Lamb, Leo
  de~Penning, BV~Illuminoo, Hoifung Poon, and Gerson Zaverucha.
\newblock Neural-symbolic learning and reasoning: A survey and interpretation.
\newblock \emph{Neuro-Symbolic Artificial Intelligence: The State of the Art},
  2022.

\bibitem[Garcez et~al.(2002)Garcez, Broda, Gabbay, et~al.]{garcez2002neural}
Artur S~d'Avila Garcez, Krysia Broda, Dov~M Gabbay, et~al.
\newblock \emph{Neural-symbolic learning systems: foundations and
  applications}.
\newblock Springer Science \& Business Media, 2002.

\bibitem[Grathwohl et~al.(2018)Grathwohl, Choi, Wu, Roeder, and
  Duvenaud]{grathwohl2018backpropagation}
Will Grathwohl, Dami Choi, Yuhuai Wu, Geoff Roeder, and David Duvenaud.
\newblock Backpropagation through the void: Optimizing control variates for
  black-box gradient estimation.
\newblock In \emph{ICLR}, 2018.

\bibitem[Griewank and Walther(2008)]{griewank2008evaluating}
Andreas Griewank and Andrea Walther.
\newblock \emph{Evaluating derivatives: principles and techniques of
  algorithmic differentiation}.
\newblock SIAM, 2008.

\bibitem[Gutmann et~al.(2011)Gutmann, Thon, Kimmig, Bruynooghe, and
  De~Raedt]{gutmann2011magic}
Bernd Gutmann, Ingo Thon, Angelika Kimmig, Maurice Bruynooghe, and Luc
  De~Raedt.
\newblock The magic of logical inference in probabilistic programming.
\newblock \emph{Theory and Practice of Logic Programming}, 2011.

\bibitem[Hoffman et~al.(2014)Hoffman, Gelman, et~al.]{hoffman2014no}
Matthew~D Hoffman, Andrew Gelman, et~al.
\newblock The no-u-turn sampler: adaptively setting path lengths in hamiltonian
  monte carlo.
\newblock \emph{J. Mach. Learn. Res.}, 2014.

\bibitem[Huang et~al.(2021)Huang, Li, Chen, Samel, Naik, Song, and
  Si]{huang2021scallop}
Jiani Huang, Ziyang Li, Binghong Chen, Karan Samel, Mayur Naik, Le~Song, and
  Xujie Si.
\newblock Scallop: From probabilistic deductive databases to scalable
  differentiable reasoning.
\newblock \emph{NeurIPS}, 2021.

\bibitem[Jang et~al.(2017)Jang, Gu, and Poole]{jang2017categorical}
Eric Jang, Shixiang Gu, and Ben Poole.
\newblock Categorical reparametrization with gumble-softmax.
\newblock In \emph{ICLR}, 2017.

\bibitem[Kingma and Ba(2015)]{kingma2015adam}
Diederik~P Kingma and Jimmy Ba.
\newblock Adam: A method for stochastic optimization.
\newblock In \emph{ICLR}, 2015.

\bibitem[Lerner(2003)]{lerner2003hybrid}
Uri~Nahum Lerner.
\newblock \emph{Hybrid Bayesian networks for reasoning about complex systems}.
\newblock stanford university, 2003.

\bibitem[Maddison et~al.(2017)Maddison, Mnih, and Teh]{maddison2017concrete}
Chris~J Maddison, Andriy Mnih, and Yee~Whye Teh.
\newblock The concrete distribution: A continuous relaxation of discrete random
  variables.
\newblock In \emph{ICLR}, 2017.

\bibitem[Manhaeve et~al.(2021{\natexlab{a}})Manhaeve, Duman{\v{c}}i{\'c},
  Kimmig, Demeester, and De~Raedt]{manhaeve2021neural}
Robin Manhaeve, Sebastijan Duman{\v{c}}i{\'c}, Angelika Kimmig, Thomas
  Demeester, and Luc De~Raedt.
\newblock Neural probabilistic logic programming in deepproblog.
\newblock \emph{Artificial Intelligence}, 2021{\natexlab{a}}.

\bibitem[Manhaeve et~al.(2021{\natexlab{b}})Manhaeve, Marra, and
  De~Raedt]{manhaeve2021approximate}
Robin Manhaeve, Giuseppe Marra, and Luc De~Raedt.
\newblock Approximate inference for neural probabilistic logic programming.
\newblock In \emph{KR}, 2021{\natexlab{b}}.

\bibitem[Marra et~al.(2021)Marra, Duman{\v{c}}i{\'c}, Manhaeve, and
  De~Raedt]{marra2021statistical}
Giuseppe Marra, Sebastijan Duman{\v{c}}i{\'c}, Robin Manhaeve, and Luc
  De~Raedt.
\newblock From statistical relational to neural symbolic artificial
  intelligence: a survey.
\newblock \emph{arXiv}, 2021.

\bibitem[Milch(2006)]{milch2006probabilistic}
Brian~Christopher Milch.
\newblock \emph{Probabilistic models with unknown objects}.
\newblock PhD thesis, University of California, Berkeley, 2006.

\bibitem[Misino et~al.(2022)Misino, Marra, and Sansone]{misino2022vael}
Eleonora Misino, Giuseppe Marra, and Emanuele Sansone.
\newblock Vael: Bridging variational autoencoders and probabilistic logic
  programming.
\newblock In \emph{NeurIPS}, 2022.

\bibitem[Morettin et~al.(2021)Morettin, Dos~Martires, Kolb, and
  Passerini]{morettin2021hybrid}
Paolo Morettin, Pedro~Zuidberg Dos~Martires, Samuel Kolb, and Andrea Passerini.
\newblock Hybrid probabilistic inference with logical and algebraic
  constraints: a survey.
\newblock In \emph{IJCAI}, 2021.

\bibitem[Nadarajah(2005)]{nadarajah2005generalized}
Saralees Nadarajah.
\newblock A generalized normal distribution.
\newblock \emph{Journal of Applied statistics}, 2005.

\bibitem[Niepert et~al.(2021)Niepert, Minervini, and
  Franceschi]{niepert2021implicit}
Mathias Niepert, Pasquale Minervini, and Luca Franceschi.
\newblock Implicit mle: backpropagating through discrete exponential family
  distributions.
\newblock \emph{NeurIPS}, 2021.

\bibitem[Paszke et~al.(2019)Paszke, Gross, Massa, Lerer, Bradbury, Chanan,
  Killeen, Lin, Gimelshein, Antiga, et~al.]{paszke2019pytorch}
Adam Paszke, Sam Gross, Francisco Massa, Adam Lerer, James Bradbury, Gregory
  Chanan, Trevor Killeen, Zeming Lin, Natalia Gimelshein, Luca Antiga, et~al.
\newblock Pytorch: An imperative style, high-performance deep learning library.
\newblock \emph{NeurIPS}, 2019.

\bibitem[Petersen et~al.(2021)Petersen, Borgelt, Kuehne, and
  Deussen]{petersen2021learning}
Felix Petersen, Christian Borgelt, Hilde Kuehne, and Oliver Deussen.
\newblock Learning with algorithmic supervision via continuous relaxations.
\newblock \emph{NeurIPS}, 2021.

\bibitem[Riguzzi and Swift(2013)]{riguzzi2013well}
Fabrizio Riguzzi and Terrance Swift.
\newblock Well--definedness and efficient inference for probabilistic logic
  programming under the distribution semantics.
\newblock \emph{Theory and practice of logic programming}, 2013.

\bibitem[Rockt{\"a}schel and Riedel(2017)]{rocktaschel2017end}
Tim Rockt{\"a}schel and Sebastian Riedel.
\newblock End-to-end differentiable proving.
\newblock \emph{NeurIPS}, 2017.

\bibitem[Ruiz et~al.(2016)Ruiz, Titsias, and Blei]{ruiz2016generalized}
Francisco J.~R. Ruiz, Michalis~K. Titsias, and David~M. Blei.
\newblock The generalized reparameterization gradient.
\newblock \emph{NeurIPS}, 2016.

\bibitem[Sato(1995)]{sato1995statistical}
Taisuke Sato.
\newblock A statistical learning method for logic programs with distribution
  semantics.
\newblock In \emph{ICLP}, 1995.

\bibitem[Schwartz(1957)]{schwartz1957theorie}
Laurent Schwartz.
\newblock Th{\'e}orie des distributions {\`a} valeurs vectorielles. i.
\newblock In \emph{Annales de l'institut Fourier}, 1957.

\bibitem[Seker and Ahn(2022)]{seker2022generalized}
Ahmet~Cagatay Seker and Sang~Chul Ahn.
\newblock A generalized framework for recognition of expiration dates on
  product packages using fully convolutional networks.
\newblock \emph{Expert Systems with Applications}, 2022.

\bibitem[Sohn et~al.(2015)Sohn, Lee, and Yan]{sohn2015learning}
Kihyuk Sohn, Honglak Lee, and Xinchen Yan.
\newblock Learning structured output representation using deep conditional
  generative models.
\newblock \emph{NeurIPS}, 2015.

\bibitem[Sterling and Shapiro(1994)]{sterling1994art}
Leon Sterling and Ehud~Y Shapiro.
\newblock \emph{The art of Prolog: advanced programming techniques}.
\newblock MIT press, 1994.

\bibitem[Tolpin et~al.(2016)Tolpin, van~de Meent, Yang, and
  Wood]{tolpin2016design}
David Tolpin, Jan-Willem van~de Meent, Hongseok Yang, and Frank Wood.
\newblock Design and implementation of probabilistic programming language
  anglican.
\newblock In \emph{Symposium on the Implementation and Application of
  Functional programming Languages}, 2016.

\bibitem[Tran et~al.(2017)Tran, Hoffman, Saurous, Brevdo, Murphy, and
  Blei]{tran2017deep}
Dustin Tran, Matthew~D. Hoffman, Rif~A. Saurous, Eugene Brevdo, Kevin Murphy,
  and David~M. Blei.
\newblock Deep probabilistic programming.
\newblock In \emph{ICLR}, 2017.

\bibitem[Tucker et~al.(2017)Tucker, Mnih, Maddison, Lawson, and
  Sohl-Dickstein]{tucker2017rebar}
George Tucker, Andriy Mnih, Chris~J Maddison, John Lawson, and Jascha
  Sohl-Dickstein.
\newblock Rebar: Low-variance, unbiased gradient estimates for discrete latent
  variable models.
\newblock \emph{NeurIPS}, 2017.

\bibitem[Williams(1992)]{williams1992simple}
Ronald~J Williams.
\newblock Simple statistical gradient-following algorithms for connectionist
  reinforcement learning.
\newblock \emph{Machine learning}, 1992.

\bibitem[Winters et~al.(2022)Winters, Marra, Manhaeve, and
  De~Raedt]{winters2022deepstochlog}
Thomas Winters, Giuseppe Marra, Robin Manhaeve, and Luc De~Raedt.
\newblock Deepstochlog: Neural stochastic logic programming.
\newblock In \emph{AAAI}, 2022.

\bibitem[Xu et~al.(2018)Xu, Zhang, Friedman, Liang, and Van~den
  Broeck]{xu2018semantic}
Jingyi Xu, Zilu Zhang, Tal Friedman, Yitao Liang, and Guy Van~den Broeck.
\newblock A semantic loss function for deep learning with symbolic knowledge.
\newblock In \emph{ICML}, 2018.

\bibitem[Yang et~al.(2020)Yang, Ishay, and Lee]{yang2020neurasp}
Zhun Yang, Adam Ishay, and Joohyung Lee.
\newblock Neurasp: Embracing neural networks into answer set programming.
\newblock In \emph{IJCAI}, 2020.

\bibitem[Zuidberg Dos~Martires(2020)]{zuidberg2020atoms}
Pedro Zuidberg Dos~Martires.
\newblock \emph{From Atoms to Possible Worlds: Probabilistic Inference in the
  Discrete-Continuous Domain}.
\newblock PhD thesis, KU Leuven, 2020.

\bibitem[Zuidberg Dos~Martires et~al.(2019)Zuidberg Dos~Martires, Dries, and
  De~Raedt]{zuidberg2019exact}
Pedro Zuidberg Dos~Martires, Anton Dries, and Luc De~Raedt.
\newblock Exact and approximate weighted model integration with probability
  density functions using knowledge compilation.
\newblock In \emph{AAAI}, 2019.

\end{thebibliography}

\clearpage

\appendix
\renewcommand\thefigure{\thesection.\arabic{figure}}    

\section{Special cases of \dspl}
\label{app:special}

The syntax and semantics of \dspl generalise a number of probabilistic logic programming dialects. For instance, if we assume no dependency of the distributional facts on input data or external neural functions, we obtain a language equivalent to~\citeauthor{gutmann2011magic}'s  {\em Distributional Clauses} (DC)~\citep{gutmann2011magic} when restricted to distributional facts. Finally, if we allow for data dependent neural functions in the NDFs but restrict them to Bernoulli and categorical distributions, we obtain~\citeauthor{manhaeve2021neural}'s \dpl~\citep{manhaeve2021neural} as a special case.
\begin{restatable}[\dspl strictly generalises \dpl]{proposition}{deepproblogspecial}
\label{proposition:deepproblogspecial}
\dpl is a strict subset of \dspl where the set of comparison predicates is restricted to $\{ \text{\probloginline{=:=}} \}$, comparisons involve exactly one random variable and the measure $\differential P_{\dfacts}$ factorizes as a product of independent Bernoulli measures
$
    \prod_{i: \text{\probloginline{x@$_i$@~b@$_i$@}}\in \dfacts}
    \differential P_{b_i}
$.
The subscript on $\differential P_{b_i}$ explicitly identifies the measure as the $i^{\text{th}}$ Bernoulli measure and the indices of the product go over all the (Bernoulli) random variables defined in the set of distributional facts $\dfacts$.
\end{restatable}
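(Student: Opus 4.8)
The plan is to prove the claim in two directions: first that every \dpl program can be written as a \dspl program of the restricted form, and second that such restricted \dspl programs are semantically equivalent to their \dpl counterparts, with strictness following from a simple cardinality/expressiveness argument.

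\textbf{Step 1: Syntactic embedding.} First I would recall the syntax of \dpl, whose core construct is the neural annotated disjunction / neural probabilistic fact, e.g.\ \probloginline{nn(m,[x],y,[c@$_1$@,...,c@$_n$@])::y(c@$_i$@)}, which asserts that the atom \probloginline{y(c@$_i$@)} is true with probability given by the $i$-th output of network \probloginline{m} on input \probloginline{x}. I would show this is captured by a \dspl program containing, for each such annotated disjunction, a categorical (or, in the binary case, Bernoulli) NDF \probloginline{z ~ categorical(m(x))} together with the PCFs \probloginline{(z =:= i)} used in place of the atoms \probloginline{y(c@$_i$@)} throughout \lrules. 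By Definition~\ref{def:ndf} the network output is a legitimate parametrisation, and each \probloginline{(z =:= i)} is a valid PCF since $\{z \mid z = i\}$ is measurable under the counting measure; this realises exactly the restricted fragment in the statement. Ordinary \dpl probabilistic facts \probloginline{p::a} are the special case with a fixed parameter and a Bernoulli NDF, and purely logical \dpl rules carry over verbatim.

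\textbf{Step 2: Semantic equivalence.} Next I would argue that under this embedding the \dspl possible-world probability (Eq.~\ref{eq:world_probability}) reduces to the \dpl world probability. Because every PCF has the form \probloginline{(z =:= i)} with $z$ discrete and each NDF is an independent Bernoulli/categorical factor, the integral in Eq.~\ref{eq:world_probability} against $\differential P_{\dfacts}$ becomes a finite sum against the product counting measure; the indicator product $\prod \indicator(c_i)\prod\indicator(\bar c_i)$ then selects exactly one joint assignment of the $z$ variables per world, and its probability is the product of the corresponding Bernoulli/categorical entries — precisely the weight \dpl assigns to that total choice. Summing over worlds in which $q$ holds (Eq.~\ref{eq:query_probability}) then coincides with the \dpl distribution-semantics probability of $q$, since the map from PCF-subsets to total choices is a bijection and the derived possible worlds $\world_{\compsubset}$ agree with \dpl's. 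This uses Proposition~\ref{proposition:inferenceaswmi} only implicitly; the argument is really just that WMI against a discrete product measure degenerates to WMC. I would also note, as already claimed in the main text, that at the inference level Sampo performs no sampling here, so the match is exact rather than asymptotic.

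\textbf{Step 3: Strictness.} Finally, for strictness I would exhibit a \dspl program with no equivalent \dpl program — Example~\ref{ex:dsplprogram} itself suffices, since it contains a \probloginline{normal} NDF and a PCF \probloginline{temp(Data) < 0} whose satisfaction probability is an integral of a Gaussian density and can take a continuum of values that \dpl, being restricted to finite distributions, cannot represent as the probability of any query; more sharply, the function from network parameters to $P(q)$ is not piecewise-constant/rational in the way \dpl query probabilities necessarily are. The main obstacle I anticipate is Step~2: one must be careful that the translation of \dpl's handling of negation, mutual exclusivity within an annotated disjunction, and the ``all ground instances'' reading of NDFs matches \dpl's stratified/least-model semantics exactly, rather than merely morally; getting the bookkeeping of worlds and substitutions to line up precisely — including the degenerate cases of empty bodies and atoms appearing in multiple disjunctions — is where the real work lies, whereas Steps~1 and~3 are essentially definitional.
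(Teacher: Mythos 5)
Your Step 2 is exactly the paper's proof: it normalises each Bernoulli variable's two complementary \probloginline{=:=} comparisons to one via negation, factorises the product Bernoulli measure over the disjoint sets of variables occurring positively and negatively in a world, pushes the integral inside the product, and evaluates it to $\prod_i p_i \prod_i (1-p_i)$, which is the \dpl world weight. The paper's proof consists solely of that computation, treating the syntactic embedding and the strictness claim of your Steps 1 and 3 as immediate, so your proposal matches the paper on the only step it actually carries out (your added aside that \dpl query probabilities are necessarily piecewise-constant/rational in the parameters is dubious, but it is not needed for strictness).
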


\begin{prf}
We prove Proposition~\ref{proposition:deepproblogspecial} by showing that applying the restrictions on the constraints and measure in a \dspl program leads to possible worlds that have the same probability of being true as in \dpl.
First we write down the definition of the probability $P(\world_{\compsubset})$ of a possible world in a \dspl program
\begin{align}
    \int
    \left[
        \left(\prod_{c_i\in \compsubset} \indicator(c_i) \right)
        \left( \prod_{c_i \in \compset\setminus \compsubset}  \indicator(\bar{c}_i) \right)
    \right]
    \ \differential P_{\dfacts}.
    \label{eq:proof_deepproblogspecial_1}
\end{align}
Now observe that, since there are only Bernoulli distributions, we only need to consider two possible outcomes of a random variable \probloginline{x@$_i$@}, either zero or one. Therefore, only two kinds of comparisons are present in the program, \probloginline{x@$_i$@=:=0} or \probloginline{x@$_i$@=:=1} (remember that we restrict ourselves to univariate comparisons). Now note that the following equivalence
$
\text{\probloginline{x@$_i$@=:=1}}\leftrightarrow \neg (\text{\probloginline{x@$_i$@=:=0}})
$
holds, which means that we can arbitrarily limit comparisons to one of the two possible outcomes of a random variable, e.g., \probloginline{x@$_i$@=:=0}.

This equivalence can be used to replace the constraints $c_i$ in Equation~\ref{eq:proof_deepproblogspecial_1} by equality constraints involving comparisons to the zero outcome, i.e., $P(\world_{\compsubset})$ is equal to
\begin{align}
\label{eq:integralofproduct}
    \int
        &\left(\prod_{i:c_i\in \compsubset} \indicator(x_i{=}0) \right)
        \cdot \\
        &\left( \prod_{i:c_i \in \compset\setminus \compsubset}  \indicator(x_i{\neq}0) \right)
    \prod_{i: \text{\probloginline{x@$_i$@~b@$_i$@}}\in \dfacts}
    \ \differential P_{b_i},\nonumber
\end{align}    
where the factorisation of the measure was also applied. Next, we introduce the following notation for the random variables present in the set of constraints $\compsubset$ and $\compset\setminus \compsubset$:
\begin{align}
    \variables{x}^+ &\coloneqq {x_i: c_i 
    \in \compsubset}
    \\
    \variables{x}^- &\coloneqq {x_i: c_i 
    \in \compset\setminus \compsubset}
\end{align}
Note that we only need to consider the case where $\variables{x}^+ \cap \variables{x}^-=\emptyset$, as otherwise the probability of the possible world would simply be zero and would not contribute to the overall probability of the query atom. Because of this, we can further factorize the measure as

\begin{align}
    &\prod_{i: \text{\probloginline{x@$_i$@~b@$_i$@}}\in \dfacts}
    \ \differential P_{b_i}
    \\
    &=
    \underbrace{
        \left(
            \prod_{i:x_i\in \variables{x}^+}
            \ \differential P_{b_i}
        \right)
    }_{
        \eqqcolon \differential P^+
    }
    \underbrace{
        \left(
            \prod_{i:x_i \in \variables{x}^-} \ \differential P_{b_i}
        \right)
    }_{
        \eqqcolon \differential P^-
    },\nonumber
\end{align}
so the integral of a product in Equation \ref{eq:integralofproduct} can be rewritten as the product of integrals
\begin{align}
    &P(\world_{\compsubset})
    = \\
    &\left[
    \int 
    \left(
    \prod_{i:c_i\in \compsubset}
        \indicator(x_i{=}0)
        \ \differential P^+
    \right)
    \right]
    \cdot 
    \nonumber
    \\
    &\left[
    \int
    \left(
    \prod_{i:c_i \in \compset\setminus \compsubset}
        \indicator(x_i{\neq}0)
        \ \differential P^-
    \right)
    \right].
    \nonumber
\end{align}
We have two integrals with integrands that are a product of univariate comparisons. In other words, the factors are all independent. Furthermore, we have a Bernoulli product measure, which means that we can again push the integral inside the product to yield
\begin{align}
    &P(\world_{\compsubset})
    = \\
    &\left[
    \prod_{i:c_i\in \compsubset}
    \left(
        \int
        \indicator(x_i{=}0)
        \ \differential P^+
    \right)
    \right]
    \cdot 
    \nonumber
    \\
    &\left[
    \prod_{i:c_i \in \compset\setminus \compsubset}
    \left(
        \int
        \indicator(x_i{\neq}0)
        \ \differential P^-
    \right)
    \right].\nonumber
\end{align}
At this point we can simply perform the integrations and obtain
\begin{align}
    P(\world_{\compsubset})
    &=
    \prod_{i:c_i\in \compsubset}
    p_{i}
    \prod_{i:c_i \in \compset\setminus \compsubset}
    (1-p_{i}),
\end{align}
which coincides with the probability of a possible world in \dpl \citep[Section 3]{manhaeve2021neural}.
\end{prf}

Proposition~\ref{proposition:deepproblogspecial} can easily be extended to also allow for measures of finite categorical distributions, which then translates to (neural) annotated disjunctions. 
Consequently, as \dpl is a strict superset of ProbLog~\citep{fierens2015inference}, \dspl also strictly generalises ProbLog.

\section{Proof of Proposition~\ref{proposition:query_measurability}}
\label{app:proof_query_probability}

\querymeasurability*
\begin{prf}
\dspl is in essence a subset of the probabilistic logic programming language defined by~\citet{gutmann2011magic} -- the only difference being that the parameters on the right-hand side of a neural distributional fact are not limited to numerical constants any more but can be arbitrary numeric terms.
Under the condition that all NDFs and PCFs are valid, this does, however, not violate any of the assumptions made in~\cite[Proposition 1]{gutmann2011magic} (proving the measurability of a program).
We can, hence, conclude that a valid \dspl program induces a probability measure for $q$.
\end{prf}

Note that, similar to ProbLog and \dpl, the semantics for \dspl are only defined for so-called sound programs~\citep{riguzzi2013well}, which means that all programs become ground eventually when queried.

\section{Proof of Proposition~\ref{proposition:inferenceaswmi}}
\label{app:proof_inference_as_wmi}

{\renewcommand\footnote[1]{}\inferenceaswmi*}
\begin{prf}
First, let us consider the indices of the two product expressions in Equation~\ref{eq:world_probability}. We define
\begin{align*}
    \negcompsubset \coloneqq \left\{\bar{c}_i\ |\ c_i \in \compset {\setminus} \compsubset\right\}
\end{align*}
such that Equation~\ref{eq:world_probability} can be rewritten as
\begin{align}
    P(\world_{\compsubset}) =
        \int
            \left(\prod_{c_i \in \compsubset {\cup} \negcompsubset}  \indicator(c_i(\variables{x}) \right)
        \ \differential P_{\dfacts}
\end{align}
Furthermore, decomposing the measure into a probability density function $\weight(\variables{x})$ and a differential $\differential\variables{x}$ of the integration variables yields
\begin{align}
        \int
            \left(\prod_{c_i\in \compsubset {\cup} \negcompsubset}  \indicator(c_i(\variables{x})) \right)
            \cdot \weight(\variables{x})
        \ \differential \variables{x}.
\end{align}
We can now plug this last expression into Equation~\ref{eq:query_probability} resulting in
\begin{align}
    P(q) =
    \int
        \sum_{\substack{C_M \subseteq \mathcal{C}_M:\\ q \in \omega_{\mathcal{C}_M}}}
        \left(
        \prod_{c_i\in \compsubset {\cup} \negcompsubset}  \indicator(c_i(\variables{x}))
        \right)
    \cdot \weight(\variables{x})
    \ \differential \variables{x}.
    \label{eq:proof_inference_as_wmi}
\end{align}
Note that we changed the order of the integration and summation. This operation was shown to be valid in~\citet{zuidberg2019exact} using de Finetti's theorem. \citet{zuidberg2019exact} also showed that the expression in Equation~\ref{eq:proof_inference_as_wmi} is indeed a weighted model integral as defined by~\citet{belle2015probabilistic}. Specifically, line P2 in the proof of Theorem 2 in \citet{zuidberg2019exact} corresponds to C.3, which is shown to be equal to an instance of WMI.
\end{prf}

\section{Details on derivative estimate}
\label{app:detaileddiff}

To give further details on estimating the derivative we will write the expression $\deriv P_{\Neuralparams}(q)$ in terms of indicator functions
\begin{align}
    &\deriv P_{\Neuralparams}(q)
    \\
    &=
    \deriv \int \amc(\variables{x}) \cdot \weight_{\Neuralparams}(\variables{x})\ \partial \variables{x} \nonumber
    \\
    &= \nonumber
    \deriv \int 
        \sum_{\substack{C_M \subseteq \mathcal{C}_M:\\ q \in \omega_{\mathcal{C}_M}}} 
        \left(
        \prod_{c_i\in \compsubset {\cup} \negcompsubset}  \indicator(c_i(\variables{x}))
        \right)
    \cdot \weight_{\Neuralparams}(\variables{x})
    \ \differential \variables{x}
    ,
\end{align}
where the dependency of the probability on the neural parameters $\Neuralparams$ is again made explicit.
Reparametrising the distribution $\weight_{\Neuralparams}(\variables{x})$ yields
\begin{align}
\label{eq:detailedreparamder}
    &\deriv P_{\Neuralparams}(q)
    = \\
    &\deriv\int
    \sum_{\substack{C_M \subseteq \mathcal{C}_M:\\ q \in \omega_{\mathcal{C}_M}}} 
        \left(
            \prod_{c_i\in \compsubset {\cup} \negcompsubset}  \indicator(c_i(\reparam)
        \right)
    \cdot p(\variables{u}) 
    \ \mathrm{d}\boldsymbol{u}. \nonumber
\end{align}
Explicitly writing out the indicators clearly illustrates the non-differentiability of $\amc(\variables{x})$, which prevents us from applying Leibniz' integral rule \citep{flanders1973differentiation} to swap the order of integration and differentiation. To obtain the necessary differentiability of the integrand, the continuous relaxations introduced by~\citet{petersen2021learning} are utilised. These relaxations allow for comparison formulae of the form
\begin{align}
    (g(\boldsymbol{x}) \bowtie 0),
    \quad \text{with}
    \bowtie\ \in\ \left\{<, \leq, >, \geq, =, \neq\right\}
\end{align}
to be relaxed. We write the continuous relaxation of an indicator function 
$\indicator(c_i(\variables{x}))
=
\indicator(g_i(\variables{x})\bowtie 0)$ as
$
    \relaxation_i(\variables{x})
$.
Four specific cases of relaxations arise, depending on the comparison operator used. Specifically, we define

\begin{align}\label{eq:adaptedsoft}
    &\relaxation_i(\variables{x}) =
    \\ \nonumber
    &\begin{cases}
    \sigma(\coolness_i \cdot g_i(\variables{x}))
    &
    \text{if $\bowtie\ \in\ \left\{>, \geq \right\}$}, \\
    \sigma(-\coolness_i \cdot g_i(\variables{x}))
    &
    \text{if $\bowtie\ \in\ \left\{ <, \leq\right\}$}, \\
    \prod_{}
    \sigma(\coolness_i \cdot g_i(\variables{x})) \cdot \sigma(-\coolness'_i \cdot g_i(\variables{x}))
    &
    \text{if $\bowtie\ \in\ \left\{=\right\}$},
    \\
    1 - \sigma(\coolness_i \cdot g_i(\variables{x})) \cdot \sigma(-\coolness'_i \cdot g_i(\variables{x}))
    &
    \text{if $\bowtie\ \in\ \left\{\neq\right\}$},
    \end{cases}
\end{align}
where $\coolness_i$ and $\coolness'_i$  are the coolness parameters of the continuous relaxations and $\sigma$ denotes the sigmoid function.
Note that all four cases originate from the root choice of approximating the step function as a sigmoid function. Additionally, this choice is sound as we have that 
\begin{equation}
    \lim_{\coolness_i \rightarrow +\infty} \sigma(\coolness_i \cdot g_i(\variables{x})) = \indicator(g_i(\variables{x}) \geq 0).
\end{equation}

Continuously relaxing indicator functions using the definition of Equation~\ref{eq:adaptedsoft} renders the integrand differentiable, allowing the application of Leibniz' integral rule and yielding
\begin{align}
    &\deriv P_{\Neuralparams}(q)
    \approx \\
    \nonumber
    &\int  \deriv
    \sum_{
            \substack{\compsubset \subseteq \compset:
            \\
            q \in \omega_{\compset}}
        }
        \left(
        \prod_{i: c_i\in \compsubset {\cup} \negcompsubset}  \relaxation_i(\reparam)
        \right)
    \cdot p(\variables{u})
    \ \differential \variables{u}.
\end{align}

The derivative $\deriv P_{\Neuralparams}(q)$ can now be computed using off-the-shelf automatic differentiation software such as PyTorch \citep{paszke2019pytorch} or TensorFlow \citep{abadi2016tensorflow}, which entails that estimating the gradient  $\nabla_{\Neuralparams} P(q) = (\deriv P(q))_{\lambda \in \variables{\Neuralparams}}$ is computationally as expensive as computing the probability itself, up to a constant factor~\citep{griewank2008evaluating}.

\section{Proof of proposition~\ref{proposition:unbiasedapprox}}
\label{app:unbiasedness}

\unbiasedapprox*

\phantom{=}
\begin{prf}
First we express $P(q)$ using Equation~\ref{eq:proof_inference_as_wmi}, which we then rewrite without loss of generalisation using only Heaviside distributions\footnote{Here we use the term \emph{distribution} in the sense of a generalised function~\citep{schwartz1957theorie} and not in the sense of a probability distribution.}.
\begin{align}\label{eq:prob_query_heavi}
    &P(q) \\ 
    &= \nonumber
    \int
        \sum_{\substack{C_M \subseteq \mathcal{C}_M:\\ q \in \omega_{\mathcal{C}_M}}}
        \left(
        \prod_{c_i\in \compsubset {\cup} \negcompsubset}  \indicator(c_i(\variables{x}))
        \right)
    \cdot \weight(\variables{x})
    \ \differential \variables{x} \\
    &= \nonumber
    \int
    \sum_{\substack{C_M \subseteq \mathcal{C}_M:\\ q \in \omega_{\mathcal{C}_M}}} 
            \left(
                \prod_{g_i \in \Sigma_{\compsubset {\cup} \negcompsubset}}
                H(g_i(\reparam))
            \right)
    \cdot p(\variables{u}) 
    \ \mathrm{d}\boldsymbol{u}
    .
\end{align}
In the Equation above, $H(x)$ denotes the Heaviside distribution and $\Sigma_{\compsubset {\cup} \negcompsubset}$ is the set of all sigmoid functions involved in the continuous relaxations of the set $\compsubset {\cup} \negcompsubset$.

This rewrite is possible as the indicator function of any PCF $c(\variables{x})$ is either a step function or decomposes into a product of step functions. Indeed, if $c(\variables{x})$ is of the form $g(\boldsymbol{x}) \geq 0$, then $\indicator(c(\variables{x})) = H(g(\boldsymbol{x}))$. If it is of the form $g(\boldsymbol{x}) = 0$, then $\indicator(c(\variables{x})) = H(g(\boldsymbol{x})) \cdot H(- g(\boldsymbol{x}))$. The other cases with different comparison operators follow from these two.

Differentiating in a distributional sense and applying Leibniz' integral rule~\citep{flanders1973differentiation} then yields
\begin{align}\label{eq:heavisideder}
    \sum_{\substack{C_M \subseteq \mathcal{C}_M:\\ q \in \omega_{\mathcal{C}_M}}}
    \sum_{g_j \in \Sigma_{\compsubset {\cup} \negcompsubset}}
    \int
    &\deriv H(g_j(\reparam)) \cdot \\ \nonumber
    &\prod_{i \neq j} H(g_i(\reparam))
    \cdot p(\variables{u}) 
    \ \mathrm{d}\boldsymbol{u}.
\end{align}

We can reduce the discussion by considering each term in this equation separately, because of the linearity of the integral. In other words, to prove our statement, it suffices to show that
\begin{align}\label{eq:heavisideterm}
    \int
    &\deriv H(g_j(\reparam)) \cdot \\ \nonumber
    &\prod_{i \neq j} H(g_i(\reparam))
    \cdot p(\variables{u}) 
    \ \mathrm{d}\boldsymbol{u},
\end{align}
is equal to
\begin{align}
    \lim_{\coolness_1, \dots, \coolness_n\rightarrow +\infty}
    \int
    &\deriv \sigma(\coolness_j \cdot g_j(\reparam)) \cdot \\ \nonumber
    &\prod_{i \neq j} \sigma(\coolness_i \cdot g_i(\reparam))
    \cdot p(\variables{u}) 
    \ \mathrm{d}\boldsymbol{u}.
    \label{eq:sigmaterm}
\end{align}

For brevity's sake, we will write the products
\begin{align}
    \prod_{i \neq j} H(g_i(\reparam))
\end{align}
and
\begin{align}
    \prod_{i \neq j} \sigma(g_i(\reparam)),
\end{align}
as $\pi_j(\variables{u})$ and $\pi_j^\sigma(\variables{u})$, respectively.
Next, using distributional notation, Equation~\ref{eq:heavisideterm} can be further simplified as
\begin{align}
    &\left\langle \deriv (H \circ g_j \circ r),\ \pi_j \cdot p \right\rangle 
    = \\ \nonumber
    &\left\langle \delta \circ g_j \circ r,\ \deriv \left(g \circ r\right) \cdot \pi_j \cdot p \right\rangle.
\end{align}
Note that this expression utilises the assumption that $\deriv (g_j \circ r) \in L^1_{loc}(\mathbb{R}^k)$, i.e., $\deriv (g_j \circ r)$ is locally integrable over $\mathbb{R}^k$.
This asssumption is not very demanding, since distributions (generalised functions) are only well-defined when acting on functions that are at least locally integrable. Equation~\ref{eq:sigmaterm} can similarly be rewritten and simplified to obtain the equality
\begin{align}
    &\phantom{={}} \lim_{\coolness_1, \dots, \coolness_n\rightarrow +\infty} \left\langle \deriv (\sigma \circ g_j \circ r),\ \pi_j^\sigma \cdot p \right\rangle\\
    &= \nonumber
    \lim_{\coolness_1, \dots, \coolness_{j - 1}, \coolness_{j+1}, \dots, \coolness_n\rightarrow +\infty} \\
    \nonumber
    &\phantom{={}}
    \left\langle \delta \circ g_j \circ r,\ \deriv (g\circ r) \cdot \pi_j^\sigma \cdot p \right\rangle.
\end{align}
More explicitly,
\begin{align}
    (\text{\ref{eq:sigmaterm}})&=
    \lim_{\coolness_1, \dots, \coolness_n\rightarrow +\infty} \\ \nonumber
    &\phantom{={}}
    \int
    \deriv \sigma(\coolness_j \cdot g_j(\reparam)) \cdot 
    \pi_j^\sigma(\variables{u}) \cdot p(\variables{u})
    \ \differential \variables{u} \\
    &= \nonumber
    \lim_{\coolness_1, \dots, \coolness_n\rightarrow +\infty} \\ \nonumber
    &\phantom{={}}
    \int
    \frac{l\cdot e^{-g(\reparam)\cdot \coolness_j}}{(1 + e^{-g(\reparam)\cdot \coolness_j})^2}\cdot \\
    &\phantom{={}\int} \nonumber
    \deriv g_j(\reparam) \cdot
    \pi_j^\sigma(\variables{u}) \cdot p(\variables{u})
    \ \differential \variables{u} \\
    &= \nonumber
    \lim_{\coolness_1, \dots, \coolness_{j - 1}, \coolness_{j+1}, \dots, \coolness_n\rightarrow +\infty} \\ \nonumber
    &\phantom{={}}
    \int
    \delta(g_j(\reparam)) \cdot \\
    &\phantom{={}\int} \nonumber
    \deriv g_j(\reparam)\cdot
    \pi_j^\sigma(\variables{u}) \cdot p(\variables{u})
    \ \differential \variables{u}.
\end{align}
The last transition uses the fact that 
\begin{equation}
\lim_{\coolness_j \rightarrow +\infty} \frac{\coolness_j \cdot e^{-g(\reparam)\cdot \coolness_j}}{(1 + e^{-g(\reparam)\cdot \coolness_j})^2}
= \delta(g(\reparam)),
\end{equation}
in the distributional sense. In addition, we also have (again in distributional sense) that 
\begin{align}
\lim_{\coolness_i \rightarrow +\infty} \sigma(\coolness_i \cdot g_i(\reparam))
= H(g_i(\reparam)).
\end{align}

This final equation allows us to replace $\pi_j^\sigma(\variables{u})$ in the final line of Equation~\ref{eq:sigmaterm} with $\pi_j(\variables{u})$ by repeating the above steps for each index $i$ separately. Hence, we can conclude that our relaxation of $\deriv P(q)$ is indeed unbiased in the infinite coolness limit.
\end{prf}

\section{Experimental details}
\label{app:moreexperiments}

This section will give detailed \dspl programs, neural network architectures and elaborated figures for each of the experiments present in the main body of the paper. 
All experiments were run on an RTX 3080 Ti coupled with a Intel Xeon Gold 6230R CPU @ 2.10GHz and 256 GB of RAM, except the LTN results of Section~\ref{subsec:dates}.
Note that the optimisation of any hyperparameters, such as learning rate or number of training epochs, was done via a grid search on a separate validation set.

\subsection{NeSy attention}
\label{app:exp_attention}

\paragraph{Setup details and \dspl program.}
The full \dspl program for the detection of handwritten years is given in Listing~\ref{program:dates}.
The query \probloginline{year} is optimised for a different number of samples depending on the experiment. For \expone, we have 28 000 training samples while there are 4000 validation and 8000 test samples. The set of years in the training, validation and test set are disjoint. For \exptwo, the size of validation and test set is the same as in the case of \expone, but with a training set of 40 000 samples. Here, the set of years in validation and test set are disjoint, but both are a subset of the set of years of the training set.

\begin{problogcode}
{\footnotesize
\begin{problog}
box(Params, B) ~ 
    generalisednormal(Params).
digit(Im, Loc, D) ~ 
    categorical(classifier([Im, Loc]), 
                [0, ..., 9]).
year(Im, Year1, Year2, Year3, Year4) :-
    region(Im, [Y1, Y2, Y3, Y4]),
    ordered_output([Y1, Y2, Y3, Y4]),
    box(Y1, B1), box(Y4, B4),
    x_diff(0.0, B1, B1diff), 
    B1diff < 0,
    x_diff(1.0, B4, B4diff), 
    0 < B4diff,
    digit(Im, Y1, D1), digit(Im, Y2, D2), 
    digit(Im, Y3, D3), digit(Im, Y4, D4),
    Year1 =:= D1, Year2 =:= D2), 
    Year3 =:= D3, Year4 =:= D4.

ordered_output([]).
ordered_output([[Mu, Sigma]]).
ordered_output([[Mu, Sigma], H2 | T]) :-
    box([Mu, Sigma], B1), box(H2, B2), 
    x_diff(B1, B2, Bdiff), 
    Bdiff < 0,
    ordered_output([H2 | T]).
\end{problog}
}
\caption{There is one continuous NDF, \probloginline{box}, which represents a bounding box as a generalised normal distribution with mean and scale being the center and width of the box, respectively. \probloginline{digit} is a discrete NDF that denotes the categorical distribution of the digit classifications made by the network \probloginline{classifier}. 
\probloginline{region} is the detection network that predicts the 4 bounding boxes, i.e., the parameters of four instances of \probloginline{box}.
Given these parameters, the predicate \probloginline{ordered_output} will enforce the spatial constraints that \probloginline{region} predicts its boxes in order from left to right on the image. It does so by taking the difference of the $x$ coordinate of each subsequent bounding box, which is a 2-dimensional random variable, and employing a \q{$<$} PCF.
Finally, the supervision on the digits of the year is given to the correct bounding box.
}
\label{program:dates}
\end{problogcode}


    
    

\paragraph{Parameters and neural architectures.}
A schematic overview of the neural architecture used for all different methods can be seen in Figure \ref{fig:subtractionnet}.
The neural baseline simply outputs the four predictions of the classification network and optimises them by minimising the categorical cross-entropy on each digit of the year.
In the case of the neural-symbolic methods, the output of both the regression and classification components are used in the logic. \dspl optimises a binary cross-entropy on the probability of \probloginline{year}, while LTN optimises the MAX-sat objective function. 
As optimiser, we utilised Adamax~\citep{kingma2015adam} with its default learning rate of $10^{-3}$. \dspl and LTNs were run for 10 epochs, while the neural baseline was given 20 epochs, all with a batch size of 10. This number of epochs proved sufficient for all methods to converge.
Interestingly, no special annealing scheme was necessary for this experiment as constant value of $50$ for the coolness parameters lead to satisfactory results.
All these hyperparameters were determined through a grid search on the validation set. 

\begin{figure}
    \centering
    \includegraphics[width=\linewidth]{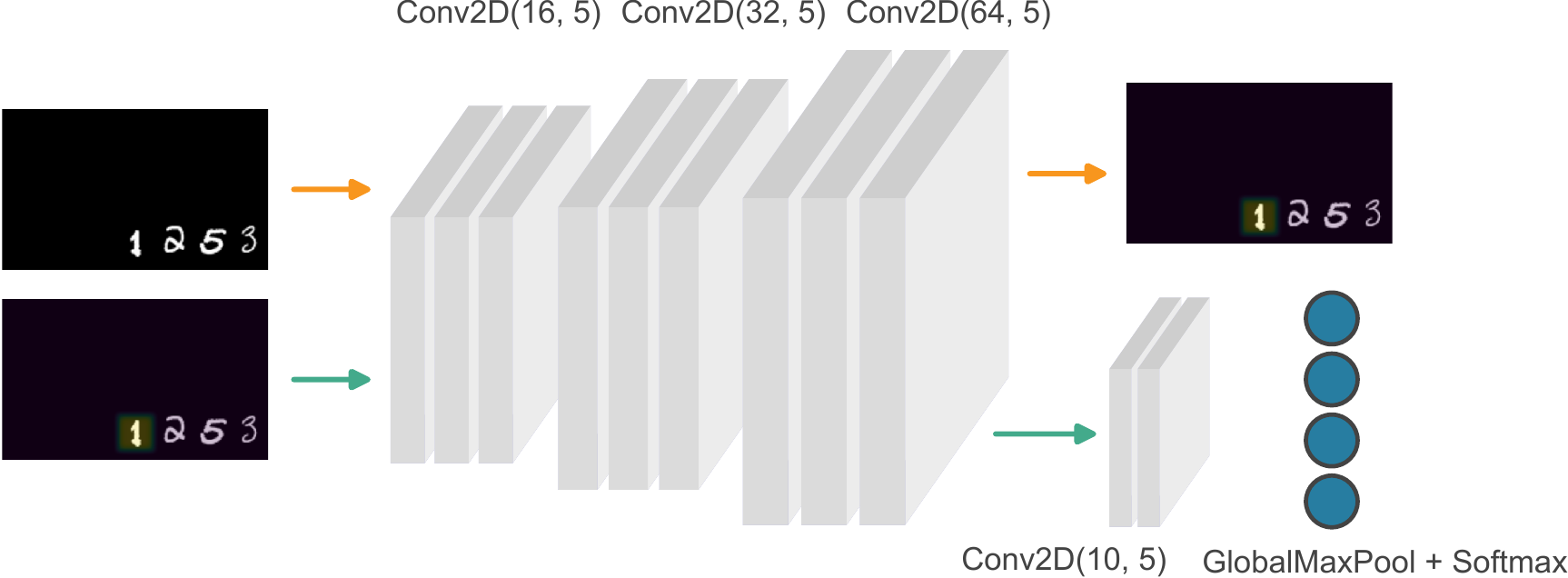}
    \caption{
    Overall neural architecture for the dates experiment. Following the \textcolor{yellow_orange}{orange} arrows first, the parameters of 4 generalised normal distributions are predicted for each image. Then, following the \textcolor{jungle_green}{green} arrows, the images is attenuated separately by each of the 4 distributions and then classified as a digit between 0 and 9 to give the total overall year prediction as an ordered tuple of 4 digits.
    }
    \label{fig:subtractionnet}
\end{figure}

\paragraph{Additional results and interpretations.}
Roughly speaking, every 100 iterations took about 25 seconds for \dspl while the neural baseline took around 15 seconds. Given the results of Table~\ref{table:yearresults} and \dspl's satisfactory solution to the problem, the additional computational cost of adding probabilistic logic is worthwile in this case.

\subsection{Neural hybrid Bayesian network}
\label{app:morebayesnet}

\paragraph{Setup details and \dspl program.}
Our encoding of the neural hybrid Bayesian network of experiment~\ref{subsec:hybridnet} is given in Listing~\ref{program:hybridnet}. The goal is to optimise the neural networks responsible for the classification of \probloginline{humid} and \probloginline{cloudy} conditions, as well as the network that predicts the temperature value. Additionally, we explicitly model the noise present on the true temperature labels as a learnable parameter. To achieve this, a set of 1200 triples (\probloginline{Im1}, \probloginline{Im2}, \probloginline{X}) are used as training set, where \probloginline{Im1} is a CIFAR-10 image belonging to one of the first three classes, while \probloginline{Im2} belongs to the last two classes. In other words, we use CIFAR-10 images as proxies for real imagery data. \probloginline{X} is a set of 25 numerical meteorological features sampled from a publicly available Kaggle dataset~\citep{cho2020comparative}. The label of each triple is the probability that the weather, as described by the correct labels of \probloginline{humid}, \probloginline{cloudy} and \probloginline{temperature}, is good. Computing this probability label is non-trivial in itself. We utilised a large set of 1000 samples to approximate the correct underlying distributions and to obtain an approximate probability label.

\begin{problogcode}
{\footnotesize
\begin{problog}
humid(Im, H) ~ 
    bernoulli(humid_detector(Im)).
cloudy(Im, C) ~ 
    categorical(cloud_detector(Im), 
                [0, 1, 2]).

temperature(X, T) ~ 
    normal(temperature_detector(X), t(_)).
snowy_pleasant ~ beta(11, 7). 
rainy_pleasant ~ beta(1, 9)
cold_sunny_pleasant ~ beta(1, 1). 
warm_sunny_pleasant ~ beta(9, 2).

rainy(I1, I2) :-
    cloudy(I1, C), C =\= 0, 
    humid(I2, H), H =:= 1.

good_weather(I1, I2, X) :-
    rainy(I1, I2), 
    temp(X, T), T < 0, 
    snowy_pleasant > 0.5.
good_weather(I1, I2, X) :- 
    rainy(I1, I2), 
    temp(X, T), T >= 0, 
    rainy_pleasant > 0.5.
good_weather(I1, I2, X) :-
    \+rainy(I1, I2), 
    temp(X, T), T > 15, 
    warm_sunny_pleasant > 0.5.
good_weather(I1, I2, X) :-
    \+rainy(I1, I2), 
    temp(X, T), T <= 15, 
    cold_sunny_pleasant > 0.5.

P :: depressed(I1) :-
    cloudy(I1, C), C =:= N, 
    P is N * 0.2.

enjoy_weather(I1, I2, X) :-
    \+depressed(I1), 
    good_weather(I1, I2, X).
\end{problog}
}
\caption{
The NDFs \probloginline{humid} and \probloginline{cloudy} classify a given image as describing humid and cloudy conditions, respectively. \probloginline{temp} takes a set of 25 numerical features and predicts a mean temperature from those. Note that \probloginline{t(_)} is ProbLog notation for a single optimisable parameter. Depending on the value of the temperature, 4 different cases of weather and their degree of pleasantness are described by beta distributions. We define \probloginline{good_weather} as being true if the degree of pleasantness of any case is larger than 0.5. Finally, a person can be \probloginline{depressed} with probability 0.2 or 0.4 depending on the degree of \probloginline{cloudy}. Both then determine whether a person can enjoy the weather, if they are not \probloginline{depressed} and \probloginline{good_weather} is the case.
}
\label{program:hybridnet}
\end{problogcode}

\paragraph{Parameters and neural architectures.}
We utilise simple classifiers (Figure~\ref{fig:burglary_nets}) in the NDFs \probloginline{cloudy} and \probloginline{humid}, while the network in the neural predicate \probloginline{temperature} has three dense layers of size 35, 35 with ReLU activations and 1 with linear activation. Both classifiers share a common set of convolutional layers, requiring the learning of features that generalise to both classification problems. Additionally, the noise on the temperature prediction is modelled explicitly as a learnable TensorFlow variable with an initial value of 10. This choice is not arbitrary, as the initial neural parameter estimate will hover around the middle of the possible temperature values and a choice of 10 as initial standard deviation allows covering the entire range of temperature values with a non-insignificant probability mass. In this way, gradient information across the entire temperature domain can be accumulated during learning. Finally, \dspl was trained for 10 epochs using Adamax with learning rate $10^{-3}$ and batch size of 10.

\paragraph{Complications.}
Ideally, simple 0-1 labels of \probloginline{enjoy_weather} would be more intuitive, as we often do not observe the probability of an event but single cases where it is either true or false. However, our experiments have showed that our small dataset is insufficient to find an optimal solution using such labels in conjunction with the very distant supervision. To show that \dspl is still able to find solutions in cases where the supervision is slightly less distant using only 0-1 labels, we added a different neural hybrid Bayesian network experiment in Section~\ref{app:additionalexperiments} based on the well-known burglary-alarm example of probabilistic logic.

\paragraph{Additional results and interpretations.}
We want to stress that learning to predict the right mean temperature from the distant supervision is not straightforward. The only learning signal for the temperature has to pass through PCFs with a very wide range, meaning they do not specify the exact temperature value. Additionally, these PDFs still do not directly influence the supervision of \probloginline{enjoy_weather}, only \probloginline{good_weather}.
The Gaussian noise that renders the temperature into a continuous random variable only further convolutes the task. We conclude that \dspl can extract meaningful learning signals from reasonably distant supervision.

\subsection{Neural-symbolic variational autoencoder}
\label{app:morevae}

\paragraph{Setup details and \dspl programs.} 
Each data sample consists of 2 regular MNIST digits and the result of their subtraction. The first digit takes the place of the minuend while the second one is interpreted as the subtrahend. The training, validation and test sets had 30 000, 1 000 and 1 000 samples of this form, respectively. Encoding a VAE without additional logic in \dspl is straightforward (Listing \ref{program:regvae}), while adding logic involves more engineering freedom (Listing \ref{program:logicvae}). We opted for the simplest use of a conditional variational auto-encoder by only using the classified digit as additional input to the decoder. Note that during optimisation, both the VAE and digit classifier are trained jointly.

\begin{problogcode}
\begin{problog}
prior(P) ~ normal(0, 1).
latent(Im, L) ~ 
    normal(encoder_net(Im)).

good_image(Image) :- 
    prior(P), latent(Im, L),
    P =:= L,
    decoder_net(L, G),
    soft_unification(G, Image). 
\end{problog}
\caption{Prototypical implementation of a Gaussian VAE in \dspl. 
A normal prior \probloginline{prior} is used to regularise a Gaussian latent space modelled by the second NDF by expressing that they should be equal.
The decoder component of the VAE is given by \probloginline{decoder_net} and returns a generated image \probloginline{G} by sampling the latent space. This generation is self-supervised by soft unifying it with the given image.
}
\label{program:regvae}
\end{problogcode}

\begin{problogcode}
\begin{problog}
prior(ID, P) ~ normal(0, 1).
digit(Emb, D) ~ 
    categorical(digit_classifier(Emb),
                [0, ..., 9]).
latent(Im, L) ~ 
    normal(encoder_net(Im)).

good_subtraction(Im1, Im2, Diff) :- 
    prior(1, P1), prior(2, P2),
    latent(Im1, L1), latent(Im2, L2),
    L1 =:= P1, L2 =:= P2,
    embedding(Im1, E1), 
    embedding(Im2, E2),
    digit(E1, D1), digit(E2, D2),
    Diff =:= D1 - D2, 
    concat(L1, D1, ConditionalL1), 
    concat(L2, D2, ConditionalL2),
    decoder_net(ConditionalL1, G1),
    decoder_net(ConditionalL2, G2),
    soft_unification(G1, Image1),
    soft_unification(G1, Image1).
\end{problog}
\caption{Combining subtraction logic with a VAE in \dspl.
Each image is encoded into a Gaussian latent space and embedded into a lower-dimensional real space. The latent space is regularised by the standard normal prior while the embedding forms the input to a digit classifier to find which digit is on the image. The two classified digits, which follow a categorical distribution, should subtract to the given value of \probloginline{Diff}. Finally, the Gaussian latent space and the categorical digits are concatenated into the conditional latent space of the CVAE. The decoder network again samples from this space to construct a generation for both images, which should softly unify with the original images.
}
\label{program:logicvae}
\end{problogcode}

\paragraph{Parameters and neural architectures. } The NeSy VAE has two main neural components (Figure \ref{fig:vae}), one for the VAE itself and another that handles the digit classification used in the subtraction logic. A small set of 256 samples with direct supervision on the digit labels is used to pre-train the classification portion of the overall network to avoid degenerate solutions. All training utilised Adam as optimiser with a learning rate of $\cdot 10^{-3}$ and took 20 epochs using a batch size of 10. The pre-training was given 1 epoch with a batch size of 4.

\begin{figure}[ht]
    \centering
    \includegraphics[width=\linewidth]{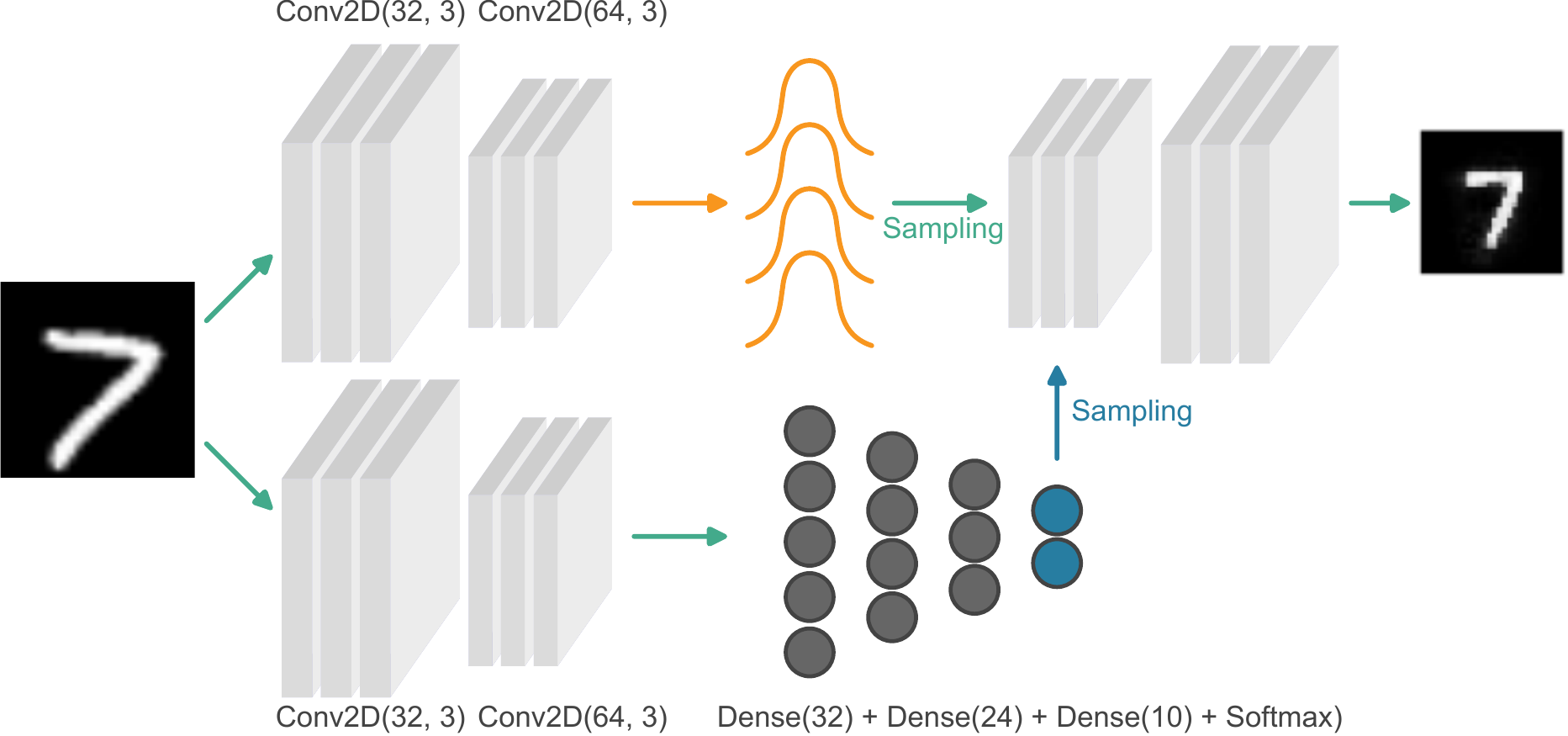}
    \caption{VAE encoder-decoder architecture. The decoder is equivalent to the transpose of the encoder. All layers use ReLU activation functions, except the final convolutional one, which applies a hyperbolic tangent.}
    \label{fig:vae}
\end{figure}

\paragraph{Complications.}
Regular Gaussian VAE optimisation has two components: a Kullback-Leibler (KL) divergence term and a reconstruction loss term. Since \dspl requires probabilistic values, i.e., between 0 and 1, a probabilistic translation of these terms is necessary for optimisation in \dspl. The KL divergence term compares the latent distribution of the VAE to a standard normal prior and can as such be replaced by a \probloginline{=:=} comparison in the logic. The reconstruction loss is chosen to be the exponentiation of a negated average $L_1$ loss function, as it yields a value between 0 and 1 that can be interpreted as the probability that two images match. Specifically, the loss between two such images $\boldsymbol{I_1}, \boldsymbol{I_2} \in \mathbb{R}^{768}$ is given by
\begin{equation}
    \exp(-\frac{1}{768}\sum_{i=1}^{768} \left|I_{1i} - I_{2i}\right|).
\end{equation}
The latter can be interpreted as a form of soft unification \citep{rocktaschel2017end}, which is why we denote it by the predicate \probloginline{soft_unification}.

\paragraph{Additional results and interpretations.}
Emphasis has to be put on the flexibility of generation in \dspl, as the generation of digits can be carried out in a range of different contexts without further optimisation. One only needs to write a query describing that logical context. The query that yields an image of both a left and right digit that subtract to a given value is given in Listing \ref{program:generative1}. The conditional query that generates an image of a right digit given an image of the left digit and their difference value is given in Listing \ref{program:generative2}.

\begin{problogcode}
\begin{problog}
generate_subtraction(G1, G2, Diff) :- 
    member(D1, [0, ..., 9]), 
    member(D2, [0, ..., 9]),
    prior(1, P1), prior(2, P2),
    Diff =:= D1 - D2, 
    concat(P1, D1, ConditionalL1), 
    concat(P2, D2, ConditionalL2),
    decoder_net(ConditionalL1, G1),
    decoder_net(ConditionalL2, G2).
\end{problog}
\caption{The logic finds all possible combinations for \probloginline{D1} and \probloginline{D2} that meet the subtraction evidence \probloginline{Diff} and concatenates these to a standard normal prior component into the conditional latent space. The decoder then generates images from a sample of this space.}
\label{program:generative1}
\end{problogcode}

\begin{problogcode}
\begin{problog}
generate_left(RightIm, Diff, LeftG) :-
    member(D1, [0, ..., 9]),
    embedding(RightIm, RightE),
    digit(RightE, RightD)
    Diff =:= LeftD - RightD,
    latent(RightIm, RightL),
    concat(RightL, LeftD, LeftCondL),
    decoder_net(LeftCondL1, LeftG).
\end{problog}
\caption{Given an image of the right digit and a difference value, we generate an image of a the left digit. The right's image is classified such that the logic can find the value of \probloginline{LeftD} that meets the given difference. By attaching that value to the Gaussian latent space of the right digit, the VAE can generate an image of the correct left digit in the \q{style} of the right one.}
\label{program:generative2}
\end{problogcode}

\section{Additional experiment}
\label{app:additionalexperiments}

An additional experiment was performed to show the promise of discrete-continuous neural probabilistic logic programming.
It is similar to the neural hybrid Bayesian network of Section~\ref{subsec:hybridnet}, but with more practical 1-0 query supervision.

\subsection{Neural-continuous burglary alarm}
\label{app:morealarms}

\paragraph{Setup details and \dspl program.}
The neural-continuous burglary alarm (Listing \ref{program:alarm}) extends the classic example from Bayesian network literature (Listing \ref{program:classicalarm}). 

\begin{problogcode}
\begin{problog}
0.1 :: earthquake.
0.3 :: burglary.
0.9 :: hears.
    
0.7 :: alarm :- earthquake.
0.9 :: alarm :- burglary.

calls :- alarm, hears.
\end{problog}
\caption{Classical burglary-alarm ProbLog program. Three probabilistic facts \probloginline{earthquake}, \probloginline{burglary} and \probloginline{hears} are given with their probabilities. A neighbour calls when hearing an alarm, while an alarm can go off because of an earthquake or a burglary.}
\label{program:classicalarm}
\end{problogcode}

Each data sample is a triple $(E, B, L)$, where $E$ can be an MNIST digit 0, 1 or 2 while $B$ can be an MNIST 8 or 9. Values for $E$ of 0, 1 and 2 correspond to no earthquake, a mild earthquake or a heavy earthquake respectively. If $B$ is an MNIST 8, then there is no burglary. If it is 9, then there is a burglary. $L$ can have either the value 0 or 1, indicating whether the neighbour called or not. Our dataset contains 12 000 such triples for training, while having 1 000 for validation and 2000 for testing purposes. Obtaining the weak supervision $L$ is done by sampling according to the true probability of calling given the input To compute this true probability, a single sample is taken from the neighbour's true distribution. This true distribution has respective means of 6 and 3 for the horizontal and vertical Gaussian while both directions have a standard deviation of 3. Additionally, there are two possible ways to express that the distance of the neighbour should be smaller than 10 distance steps before hearing the alarm. One can use either the squared distance or the true distance in the rule \probloginline{hears}. A separation is often maintained in the weighted model integration literature \citep{zuidberg2019exact} between comparison formulae that are polynomial and those that are generally non-polynomial. To illustrate that \dspl can deal with both classes of formulae, we will perform experiments for both the squared distance (polynomial, Listing \ref{program:alarm}) and the true distance (non-polynomial, Listing \ref{program:truedistance}). Both these functions are implemented in Python and \dspl allows them to be easily imported as built-in predicates.

\begin{problogcode}
{\footnotesize
\begin{problog}
earthquake(Im, E) ~ 
    categorical(earthquake_net(Im), 
                [0, 1, 2]).
burglary(Im, B) ~ 
    categorical(burglary_net(Im), 
                [8, 9]).

neighbour(N) ~ 
    normal([t(@$\mu_x$@), t(@$\mu_y$@)], 
           [t(@$\sigma_x$@), t(@$\sigma_y$@)]).

hears :- 
    neighbour(N),
    squared_distance(0, N, D),
    D < 100.
    
P :: alarm(Im1, _) :- 
    earthquake(Im1, E), E =:= N, 
    P is N * 0.35.
0.9 :: alarm(_, Im2) :- 
    burglary(Im2, B), B =:= 9.

calls(Im1, Im2) :- 
    alarm(Im1, Im2), hears.
\end{problog}
}
\caption{
Our extension of the burglary alarm example has two categorical NDFs that model the chance of an earthquake and a burglary given an image. Additionally, whether the neighbour can hear the alarm if it goes off depends on their spatial distribution, which is modelled as a two-dimensional Gaussian distribution. This distribution is randomly initialised and its parameters need to be optimised.}
\label{program:alarm}
\end{problogcode}

\begin{problogcode}
\begin{problog}
hears :- 
    neighbour(N),
    distance(0, N, D),
    D < 10.
\end{problog}
\caption{Using the true distance in the \probloginline{hears} predicate as a case of a non-polynomial comparison formula.}
\label{program:truedistance}
\end{problogcode}

\paragraph{Parameters and neural architectures.} The complete neural architecture of both the earthquake and burglary classifiers is given in Figure \ref{fig:burglary_nets}. In addition to the neural parameters in these networks, four independent parameters are present in the program. These are used as the means and standard deviations for the neighbour's spatial distribution and are randomly initialised. Specifically, the means are sampled uniformly from the interval $\left[0, 10\right]$ while the standard deviations were sampled from $\left[2, 10\right]$. All optimisation was performed using regular stochastic gradient descent with a learning rate of $8\cdot 10^{-2}$ for two epochs using a batch size of 10.

\begin{figure}[ht]
    \centering
    \includegraphics[width=\linewidth]{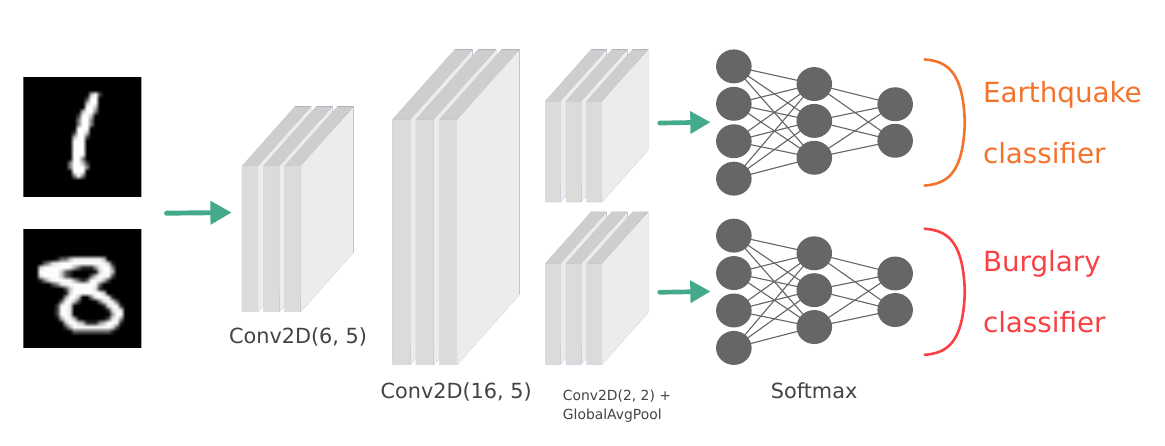}
    \caption{Overview of the architecture of the earthquake and burglary networks. Both share two convolutional layers, but each specific network applies its own final convolutional layer followed by a global average-pooling operation with softmax activation. All other activation functions are ReLUs.}
    \label{fig:burglary_nets}
\end{figure}

\paragraph{Complications.}
Because of the difference in nature between the parameters in the neural networks and the four independent parameters in the Gaussian distribution, the latter required a boosted learning rate to provide consistent convergence. Specifically, the gradients for these four parameters were multiplied by a value of 20, which was found by a hyperparameter optimisation on the validation set.

\paragraph{Results and interpretation.}
Initial learning progress of the neural networks seems volatile (Figure \ref{fig:burglary_discrete}), which is likely due to the unoptimised state of the neighbour's spatial distribution. Two epochs of training proves to be sufficient to optimise both the neural detectors and the neighbour's distribution. 
The 4 parameters of the neighbour's distribution do not converge to the true values, which is to be expected as their supervision is underspecified. However, they do converge to values that result in PCF probabilities that are close to those of the true underlying distribution. All in all, three conclusions can be drawn. First, this experiment indicates that \dspl is capable of jointly optimising neural parameters and independent, distributional parameters. Second, \dspl seems to be able to fully exploit both polynomial and more general non-polynomial comparison formulae. It shows the strength of our approximate approach, as exact methods often fail to efficiently deal with non-polynomial formulae \citep{zuidberg2019exact}. Third, \dspl can deduce meaningful probabilistic information from weak labels. Indeed, in order to optimise the neural detectors and the neighbour's distribution, \dspl has to aggregate meaningful update signals from the 0-1 labels across the given training data set to approximate the underlying probability of \probloginline{calls}. 




\begin{figure}[ht]
    \centering
    \includegraphics[width=0.49\linewidth]{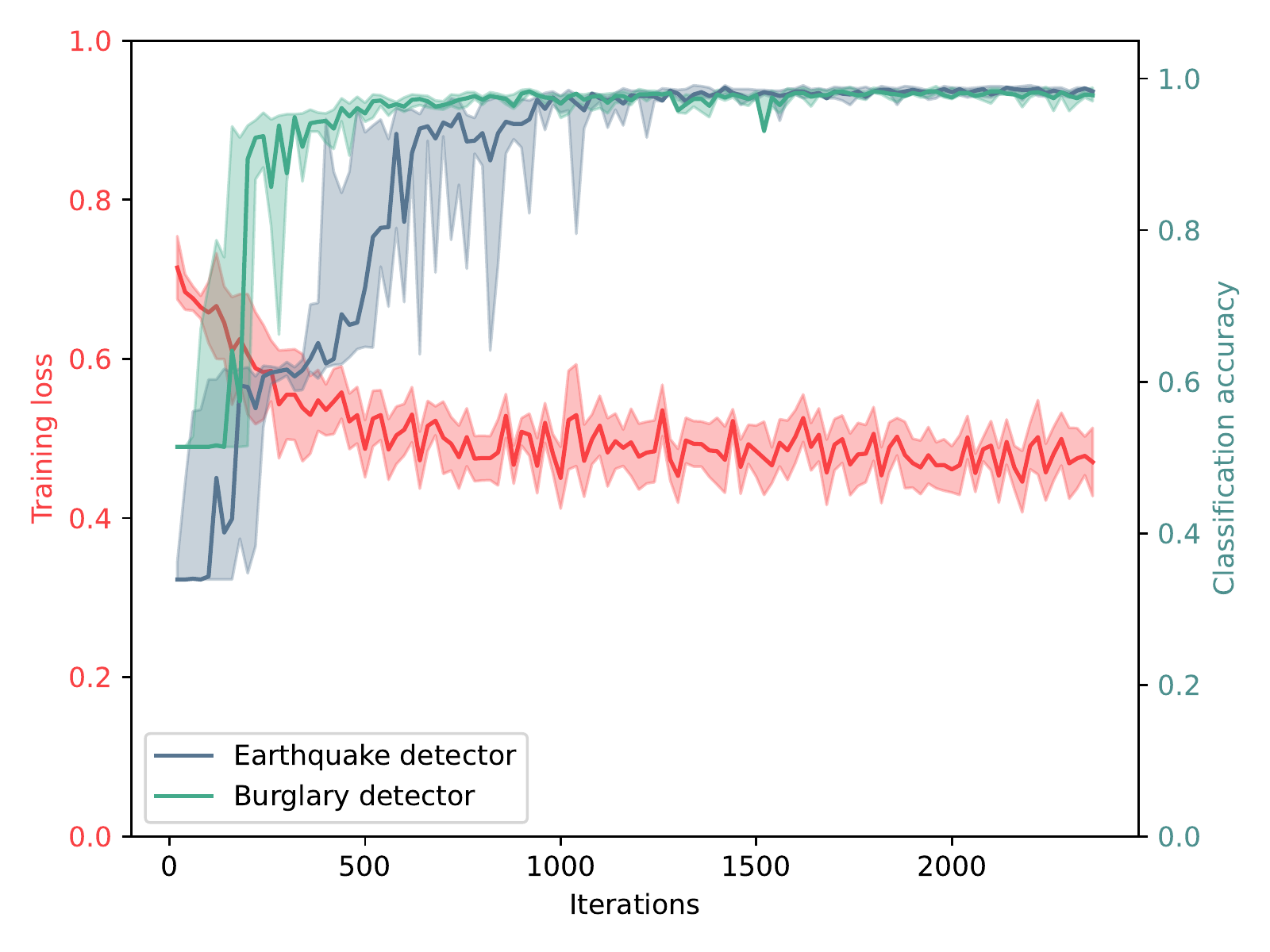}
    \includegraphics[width=0.49\linewidth]{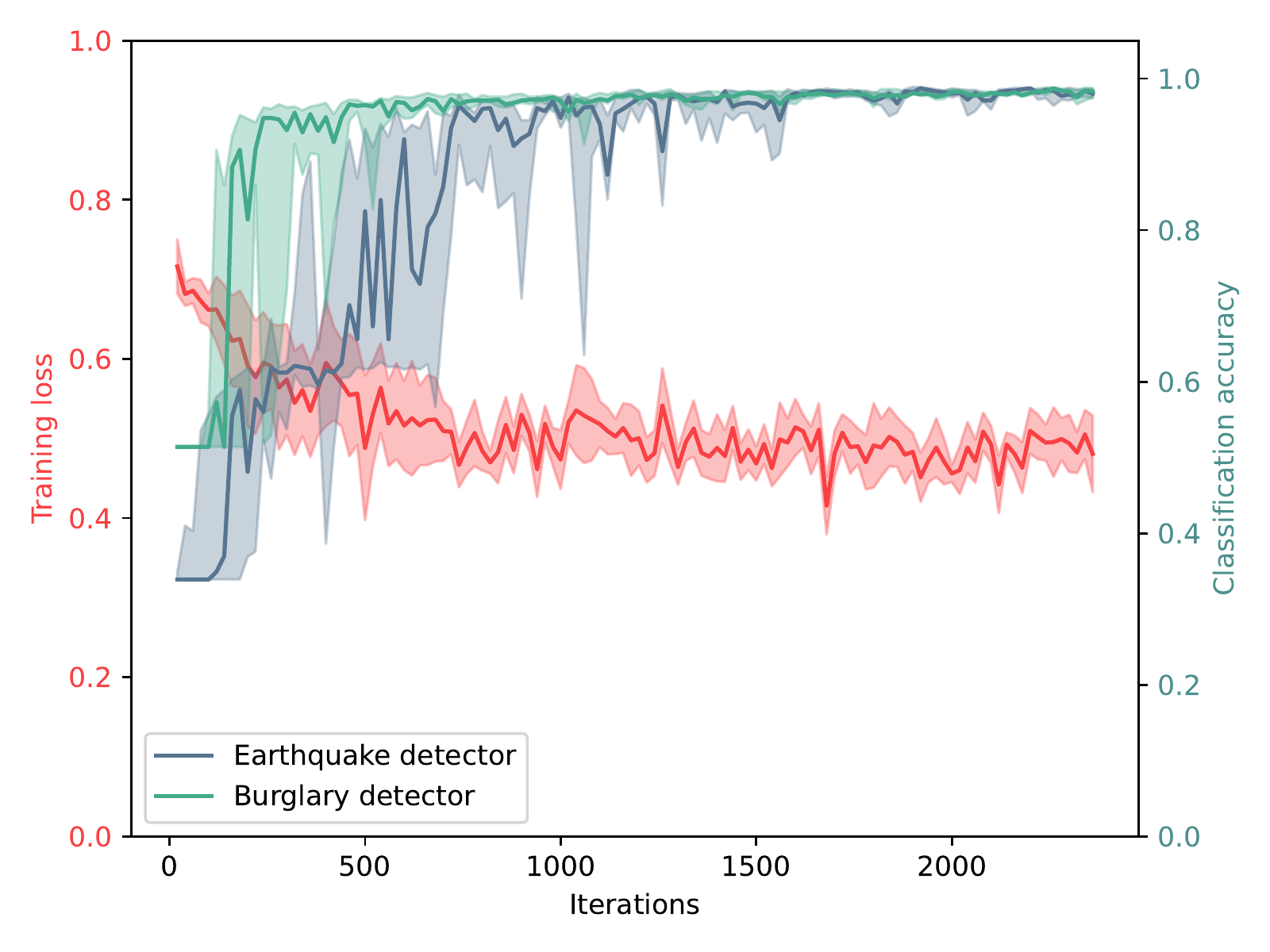}
    \caption{Evolution of the training loss and validation accuracy of the neural \q{earthquake} and \q{burglary} detectors. For both squared (left) and true distance (right), the discrete supervision seems to be sufficient to facilitate meaningful learning.}
    \label{fig:burglary_discrete}
\end{figure}



\section{Limitations}
\label{app:limitations}

The main limitation of \dspl is one that it inherits from probabilistic logic in general, computational tractability. Efficiently representing a probabilistic logic program is done via knowledge compilation, which is $\#P$-hard. Once the probabilistic program is knowledge compiled, evaluating the compiled structure is linear in the size of this structure.
Inference remains linear in the size of the compiled structure after the addition of continuous random variables as all samples can be run in parallel with the current inference algorithm.

Although our sampling strategy is efficient in the sense that it is linear in the number of samples, uses the advanced inference techniques of Tensorflow Probability to effectively sample higher dimensional distributions, and it can be executed in parallel for each sample, it remains ignorant of the comparison formulae that are approximated. More intricate inference strategies exist within the field of weighted model integration~\citep{morettin2021hybrid}, yet they currently lack the differentiability property to be integrated in \dspl's gradient-based optimisation. Conversely, our examples illustrate that our rather naive strategy is sufficient to solve basic tasks.
It is still an open question how to perform successful joint inference and gradient-based learning under general comparisons.

Orthogonal to the estimation of the integral during inference, exact knowledge compilation also prevents the scaling of \dspl to larger problem instances. Approximate knowledge compilation is the field of research that deals with tackling this issue. While it contains interesting recent work~\citep{fierens2015inference, huang2021scallop, manhaeve2021approximate}, it was highlighted by~\citeauthor{manhaeve2021approximate} that the introduction of the neural paradigm does lead to further complications. As such, we opted for exact knowledge compilation, but it has to be noted that we will be able to benefit from any future advances in the field of approximate inference. Alternatively, different semantics~\citep{winters2022deepstochlog} can simplify inference, but they lead to a degradation of expressivity of the language.

A potential future avenue for scaling up \dspl inference would be the use of further continuous relaxation schemes. More specifically, replacing discrete random variables with relaxed categorical variables~\citep{maddison2017concrete,jang2017categorical} might allow us, for instance, to forego the knowledge compilation step while still being able to pass around training signals.

\end{document}